\documentclass{article} 
\usepackage{preprint,times}

\usepackage{mathtools, amsmath, amssymb, xcolor, amsthm, enumerate}
\usepackage{nicefrac}
\usepackage{enumitem}
\usepackage{booktabs} 
\usepackage{bbm}
\usepackage{algorithm}
\usepackage{algpseudocode}
\usepackage{tabularx}
\usepackage{titlesec}
\usepackage{subcaption}
\usepackage{tikz}
\titleformat{\paragraph}[runin]{\normalfont\bfseries}{\theparagraph}{1em}{}
\titlespacing{\paragraph}{0pt}{1pt}{1em}

\usepackage{amsmath,amsfonts,bm}

\def\1{\bm{1}}

\DeclareMathAlphabet{\mathsfit}{\encodingdefault}{\sfdefault}{m}{sl}
\SetMathAlphabet{\mathsfit}{bold}{\encodingdefault}{\sfdefault}{bx}{n}

\newcommand{\R}{\mathbb{R}}

\DeclareMathOperator{\sign}{sign}

\usepackage{hyperref}
\usepackage{url}

\newtheorem{theorem}{Theorem}
\newtheorem{lemma}[theorem]{Lemma} 
\newtheorem{proposition}[theorem]{Proposition}

\newtheorem{corollary}[theorem]{Corollary}

\newcommand{\N}{{\mathbb N}}
\renewcommand{\d}{\, \mathrm{d}}
\newcommand{\e}{\mathrm{e}}

\newcounter{algline}
\newcommand{\algrow}{\stepcounter{algline}\arabic{algline} & }

\title{Quasi Linear Kernel Attention with Infinite Capacity}

\author{Nicolaj Rux$^{1}$, Johannes Hertrich$^{2}$ \& Sebastian Neumayer$^{1}$ \\
$^{1}$Faculty of Mathematics, Chemnitz University of Technology, 09126 Chemnitz, Germany \\
$^{2}$Institute of Computer Science, University of Göttingen, 37073 Göttingen, Germany \\
\texttt{\{nicolaj.rux,sebastian.neumayer\}@mathematik.tu-chemnitz.de} \\
\texttt{johannes.hertrich@uni-goettingen.de}
}

\iclrfinalcopy 
\begin{document}

\maketitle

\begin{abstract}
The evaluation cost of transformers with softmax attention scales quadratically with sequence length. Kernel attention addresses this by replacing softmax with a more general kernel function. In this paper, we aim to identify kernels that retain the expressivity of attention while enabling quasi linear computation.
To quantify expressivity, we introduce a capacity for each kernel, measuring the maximum sequence length for which the attention matrix can approximate the identity. A higher capacity thus indicates greater expressivity. We show that expressive kernels like softmax, Gauss, and Laplace have infinite capacity. In contrast, common quasi linear kernels, such as those derived from finite dimensional feature maps, exhibit finite capacity.
As a solution, we propose additive kernels constructed from univariate spline and polynomial exponential kernels. We prove that these maintain infinite capacity while allowing quasi linear computation via sorting. Finally, we implement additive sorting kernels efficiently and benchmark them against modern softmax backends, demonstrating advantages for long sequences.
\end{abstract}

\section{Introduction}
\label{sec:intorduction}
Transformers \citep{VSPUJGKP2017AttentionIsAllYouNeed} are the dominant architecture in language processing \citep{Brown2020} and computer vision \citep{DBKWZUDMHGUH2021ViT}.
Their central building block is attention, which computes for \emph{query vectors} ${\bm q}=(q_m)_{m=1}^M\subseteq\R^D$, \emph{key vectors} ${\bm k}=(k_n)_{n=1}^N\subseteq\R^D$, and \emph{value vectors} ${\bm v}=(v_n)_{n=1}^N\subseteq\R^C$ the $M$ \emph{attention} vectors $(y_m)_{m=1}^M$ as
\begin{equation}\label{eq:kernel_attention}
   y_m={\sum_{n=1}^{N}\Phi(q_m,k_n)v_n}\Bigr/{\sum_{n=1}^{N}\Phi(q_m,k_n)},
    \qquad m=1,\dots,M.
\end{equation}
Most implementations use the \emph{softmax} kernel $\Phi(q,k) = \e^{\tau q^\top k}$ with inverse temperature $\tau>0$, which acts as similarity score between the vectors $q$ and $k$.
For $M=N$, the computational effort scales quadratically with the \emph{sequence length} $N$.
This has become the main computational bottleneck in practice.
Current models are trained and deployed with context windows of $128$k tokens and beyond \citep{llama3herd2024}, which involves substantial engineering effort \citep{dao2022flash, dao2024flash2}.
There have been several attempts to escape the quadratic complexity through approximations and windowed attention patterns; see \cite{tdbm2022EffTrans} for an overview.

In this paper, we focus on \emph{exact} subquadratic attention, where the structure of the kernel $\Phi$ allows the computation of \eqref{eq:kernel_attention} in quasi linear time \citep{TBYMS2019tsai}.
Within this class, \cite{KVPF_Trans_are_RNNs} achieved linear complexity by choosing kernels that factor over a \emph{finite feature map} (FFM) $\varphi\colon \R^D\to \R^L$, i.e., $\Phi(q,k)=\varphi(q)^\top \varphi(k)$.
This construction yields symmetric positive definite (spd) kernels, which can be evaluated in linear time.
However, for small $L$, these approaches have been shown to admit only limited expressivity compared to softmax attention, see \cite{sis2021FastWeight}.
At the same time, the cost of evaluating the finite feature map increases linearly with $L$.

On the other hand, there exist several univariate kernels (namely $D=1$), for which the corresponding kernel sums can be evaluated exactly via quasi linear sorting algorithms.
Examples are polynomial exponential kernels like the Laplace kernel \citep{H2021FastExactEval} and piecewise linear kernels \citep{Johannes2024KernelSum, generative_sliced, VB2025slicedReLU}. However, most of these algorithms do not generalize to higher dimensions.
An exception is the Laplace kernel, for which the sums can be evaluated in quasi linear time for arbitrary $D$, see \cite{LW2021FastLaplaceD}, but the complexity depends exponentially on $D$, making the approach practically infeasible for $D>6$.

\paragraph{Contribution}

Our goal is to identify kernels $\Phi$ such that the attention \eqref{eq:kernel_attention} is both expressive and computable in quasi linear time.
To this end, we introduce the \emph{capacity} of a kernel.
Its definition is inspired by the associative recall experiment of \cite{sis2021FastWeight} and measures up to which sequence length the identity can be approximated by the attention algorithm. 
Intuitively, this corresponds to the number of tokens that can be distinguished. 
Consequently, a higher capacity indicates higher expressivity.
We prove that common choices have infinite capacity, including the softmax, Gauss, and Laplace kernels.
This resembles the observation that their associated attention is expressive in practice.
For kernels based on an FFM, we prove that the capacity is upper-bounded by the number $L$ of considered features, reflecting their limited expressivity.

In order to achieve quasi linear attention with infinite capacity, we consider additive kernels of the form $\Phi(q,k)=\phi(q_1,k_1)+\ldots +\phi(q_D,k_D)$, where $\phi$ is a univariate kernel with quasi linear attention.
For the univariate Laplace kernel $\phi(s,t)=\exp(-|s-t|)$ and the bump kernel $\phi(s,t)=\max(0,1-|s-t|)$, we prove that the capacity of both $\phi$ and $\Phi$ is infinite, so that we obtain quasi linear attention with infinite capacity.
We verify our findings in numerical experiments and provide an efficient CUDA implementation.
For sequence lengths $N \gtrsim 2{,}048$, our implementation is faster than conventional single-precision PyTorch attention, which offers a comparable level of numerical precision and code optimization.
Compared to the highly optimized half-precision FlashAttention, we reach the break-even point for $N \gtrsim 20{,}000$.
Thanks to the quasi linear complexity of our algorithm, the advantage increases rapidly with the sequence length.

\paragraph{Related work}
Several works reduce the quadratic cost of attention by approximation, either through sparse and windowed attention patterns \citep{KKL2020Reformer, BPC2020Longformer, ZGDAAOPRWY2020BigBird}, or through low-rank approximations of the attention matrix \citep{WLKFM2020Linformer, XZCTFLS2021Nystromformer}.
For an extensive overview, see \cite{tdbm2022EffTrans}.
The FFM idea was refined by \cite{performer2021, PPYSSK2021RFA, QSDLWLYKZ2022cosFormer} with feature maps tailored to the softmax kernel and by \cite{sis2021FastWeight}, who increased the \emph{feature dimension} $L$ after recognizing that it caps the attention capacity.
Lower bounds on the recurrent state size required for multi-query associative
recall were proven by \cite{ABTPZRR2024zoology}.
Building on this, \cite{Based2024} combine linear attention with sliding windows, while gated linear attention \citep{GLA2024} and DeltaNet \citep{DeltaNet2024} use data-dependent gating and delta-rule updates of the recurrent state, respectively, to better use a fixed-size state.
These models are recurrent and go beyond the normalized kernel sums considered in \eqref{eq:kernel_attention}.
Alternative notions of expressivity for attention mechanisms were considered in \cite{Yun2020Are, furuya2025transformers}.
Our proposed bump and Laplace variants have been previously considered in different contexts:
\cite{flwztxmz2026laplaceformer} approximate the Laplace kernel attention via a finite feature map/Nyström approach, and \cite{VB2025slicedReLU} combine the univariate bump kernel with a linear or MLP layer before each attention module to project onto a single dimension.
Finally, the additive kernels are related to kernel slicing, see also Appendix~\ref{app:sec:slicing}.

\section{Capacity of Kernel Attention}
\label{sec:kme}

In order to determine which kernels $\Phi$ lead to an expressive attention mechanism, we assign to each $\Phi$ a capacity $\mathrm{Cap}(\Phi)$.
Intuitively, this measures the maximal number of tokens that can be distinguished by the attention mechanism, which corresponds to the largest sequence length such that the identity can be approximated.
Consequently, a larger capacity corresponds to more expressive attention.
In this section, we formally define $\mathrm{Cap}(\Phi)$ and derive sufficient conditions such that $\mathrm{Cap}(\Phi)=\infty$.
These conditions apply in particular to the Gauss, softmax, and Laplace kernels, which are empirically known to correspond to expressive attention mechanisms.

\paragraph{Definition of Capacity}
For a kernel $\Phi$, keys $\bm k=(k_n)_{n=1}^N$ and queries \smash{$\bm q=(q_m)_{m=1}^M$}, we define the \emph{Gram matrix} $G(\bm q,\bm k)=(G_{m,n})_{m,n=1}^{M,N}$ and the \emph{attention matrix} $A(\bm q,\bm k)=(A_{m,n})_{m,n=1}^{M,N}$ by
\begin{equation}\label{eq:gram_attention}
    G_{m,n}= \Phi(q_m, k_n) \qquad \text{ and }\qquad  A_{m,n} = {G_{m,n}}/{\textstyle \sum_{l=1}^N G_{m,l}}.
\end{equation}
For $\bm v = (v_n)_{n=1}^N$ in $\R^C$, the attention mechanism can be rewritten as the matrix-vector multiplication $y_{c} = A(\bm q,\bm k) v_{c}$ for $c=1,\dots,C$.
We denote the \emph{capacity} $\mathrm{Cap}(\Phi)$ of $\Phi$ as the largest sequence length $N$ such that there exist keys $\bm k$ and queries $\bm q$ for which $A(\bm q,\bm k)\approx \mathrm{Id}_N$, where $\mathrm{Id}_N$ is the $N\times N$ identity matrix.
Formally, we define
\begin{equation}
    \mathrm{Cap}(\Phi)=\sup\left\{N\in\N:\inf_{\bm q,\bm k\in (\R^D)^N}\|A(\bm q,\bm k)-\mathrm{Id}_N\|_F^2=0\right\}.
\end{equation}
In other words, if the sequence length $N$ is larger than $\mathrm{Cap}(\Phi)$, the attention mechanism is no longer able to represent the identity, which indicates a lack of expressiveness.

\paragraph{Kernels with Infinite Capacity}
A kernel $\Phi$ is stationary if $\Phi(q,k)=F(q-k)$ for some function $F$.
If $F$ is continuous and fulfills $F(x)\to 0$ as $\|x\|_2\to\infty$, we call $\Phi$ a stationary $\mathcal C_0$ kernel. 
\begin{theorem}\label{thm:inf_cap}
Let $\Phi$ be a stationary $\mathcal C_0$ kernel with $F(0)>0$, then it holds $\mathrm{Cap}(\Phi)=\infty$.
\end{theorem}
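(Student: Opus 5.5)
Fix an arbitrary $N\in\N$. It suffices to construct, for every $\varepsilon>0$, queries and keys $\bm q,\bm k\in(\R^D)^N$ with $\|A(\bm q,\bm k)-\mathrm{Id}_N\|_F^2\le\varepsilon$. Then the infimum in the definition of $\mathrm{Cap}(\Phi)$ vanishes for every $N$, and hence $\mathrm{Cap}(\Phi)=\infty$. I will use one family of configurations. Fix a unit vector $e\in\R^D$ and, for a scale $R>0$, set $q_n=k_n=nRe$ for $n=1,\dots,N$. Then $G_{m,m}=F(0)>0$, and for $m\neq n$ we have $G_{m,n}=F((m-n)Re)$ with $\|(m-n)Re\|_2\ge R$.

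The key quantity is
\[
\delta(R)=\sup_{\|x\|_2\ge R}|F(x)|,
\]
which tends to $0$ as $R\to\infty$ by the $\mathcal C_0$ assumption. Continuity is not even needed for this step, only the decay and $F(0)>0$. Choose $R$ large enough that $(N-1)\delta(R)<F(0)/2$.

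For each row $m$, write $S_m=\sum_{l} G_{m,l}=F(0)+E_m$, where $|E_m|\le (N-1)\delta(R)$. Then $S_m\ge F(0)/2>0$, so the attention matrix is well defined. This matters because $F$ is not assumed nonnegative, and this denominator bound is the step where some care is needed. Now estimate the entries directly. On the diagonal,
\[
|A_{m,m}-1|=\frac{|E_m|}{S_m}\le \frac{2(N-1)\delta(R)}{F(0)}.
\]
Off the diagonal,
\[
|A_{m,n}|\le \frac{2\delta(R)}{F(0)}.
\]
Summing the squares gives
\[
\|A-\mathrm{Id}_N\|_F^2\le \frac{4}{F(0)^2}\Bigl(N(N-1)^2\delta(R)^2+N(N-1)\delta(R)^2\Bigr)=\frac{4N^2(N-1)\,\delta(R)^2}{F(0)^2}.
\]
The right-hand side tends to $0$ as $R\to\infty$ with $N$ fixed. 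This establishes the claim for every $N$.

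I expect no real obstacle here. The only subtle points are, first, ensuring the row sums stay bounded away from zero when $F$ may take negative values, which the choice of $R$ handles; and second, noting that $q=k$ is allowed, since the definition places no constraint tying queries to keys.
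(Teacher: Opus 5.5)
Your proof is correct and follows the paper's argument. The paper also sets $q_n=k_n=r x_n$ for distinct points $x_n$ (you take the special choice $x_n=ne$) and lets $r\to\infty$, so the off-diagonal Gram entries vanish while the diagonal stays at $F(0)>0$; your version makes this quantitative via the row-sum lower bound $S_m\ge F(0)/2$ and the explicit estimate $\|A-\mathrm{Id}_N\|_F^2\le 4N^2(N-1)\delta(R)^2/F(0)^2$, both left implicit in the paper's limit argument.
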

The theorem applies in particular to the Gauss kernel \smash{$\Phi(q,k)=\exp(-\frac12\|q-k\|_2^2)$} and many others including Laplace and Mat\'ern kernels.
Even though the softmax kernel $\Phi(q,k)=\exp(q^\top k)$ is not stationary, we can exploit its close relation to the Gauss kernel to show that $\mathrm{Cap}(\Phi)=\infty$.
\begin{proposition}\label{prop:softmax_capacity}
For $D\ge 2$ and $\Phi(q,k)=\exp(q^\top k)$, we have $\mathrm{Cap}(\Phi)=\infty$.
\end{proposition}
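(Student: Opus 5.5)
The plan is to reduce to the Gauss kernel via the identity
\[
\exp(q^\top k)=\exp\bigl(\tfrac12\|q\|_2^2\bigr)\exp\bigl(\tfrac12\|k\|_2^2\bigr)\exp\bigl(-\tfrac12\|q-k\|_2^2\bigr).
\]
In the attention matrix, the row normalization in \eqref{eq:gram_attention} cancels the query factor $\exp(\frac12\|q_m\|_2^2)$. The key factors $\exp(\frac12\|k_n\|_2^2)$ do not cancel, since they depend on $n$. The idea is to remove them by placing all keys on a common sphere, $\|k_n\|_2=r$ for all $n$. This is possible for arbitrarily many distinct points exactly when $D\ge 2$, which is where that hypothesis enters. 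Once the norms are equal, the key factor is a constant and also cancels. Then $A(\bm q,\bm k)$ for the softmax kernel coincides with the attention matrix of the Gauss kernel for the same $\bm q,\bm k$.

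The construction goes as follows. Fix $N\in\N$ and take $N$ distinct unit vectors $u_1,\dots,u_N$ on the circle in the first two coordinates, for instance $u_n=(\cos(2\pi n/N),\sin(2\pi n/N),0,\dots,0)$. Let $\delta=\min_{n\neq l}\|u_n-u_l\|_2>0$. For a scale $r>0$ set $q_n=k_n=r u_n$. The computation above gives
\[
A_{m,n}=\frac{\exp\bigl(-\tfrac12 r^2\|u_m-u_n\|_2^2\bigr)}{\sum_{l=1}^N \exp\bigl(-\tfrac12 r^2\|u_m-u_l\|_2^2\bigr)}.
\]
Equivalently, we can compute directly from $q_m^\top k_n=r^2 u_m^\top u_n$: dividing numerator and denominator by $\exp(r^2)$ produces the same expression, so the factorization is not even needed. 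The diagonal term in the denominator equals $1$, and each off-diagonal term is at most $\exp(-\frac12 r^2\delta^2)$.

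It remains to bound the error. We get $|A_{m,m}-1|\le (N-1)\exp(-\frac12 r^2\delta^2)$ and $A_{m,n}\le \exp(-\frac12 r^2\delta^2)$ for $m\ne n$. Hence $\|A-\mathrm{Id}_N\|_F^2\le 2N(N-1)^2\exp(-r^2\delta^2)$, which tends to $0$ as $r\to\infty$. So the infimum is $0$ for every $N$, and therefore $\mathrm{Cap}(\Phi)=\infty$.

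Alternatively, one could invoke Theorem~\ref{thm:inf_cap} for the Gauss kernel, but its proof presumably spreads the points along a line to infinity. That would not keep the norms equal, so I prefer the self-contained scaling argument on the circle. The only real subtlety, and the step I expect to be the main obstacle, is the key-norm factor. It is handled precisely by the equal-norm choice, which needs $D\ge 2$; for $D=1$ only two points share a norm, consistent with the hypothesis.
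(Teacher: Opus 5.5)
Your proposal is correct and matches the paper's proof. The paper also factors $\exp(q^\top k)$ into the Gauss kernel times $\exp(\frac12\|q\|_2^2)\exp(\frac12\|k\|_2^2)$. It places the points on the unit sphere so that the key factors cancel, which is possible for arbitrarily many points only when $D\ge 2$. It then scales $q_n=k_n=r x_n$ with $r\to\infty$.

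Your hesitation about reusing Theorem~\ref{thm:inf_cap} is unnecessary. Its proof scales arbitrary distinct points $x_n$, and the paper simply chooses them with $\|x_n\|_2=1$. Your explicit bound $\|A-\mathrm{Id}_N\|_F^2\le 2N(N-1)^2\e^{-r^2\delta^2}$ just makes that limit quantitative.
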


\paragraph{Upper Bounding the Capacity}
Next, we derive a criterion which upper bounds the capacity of an spd kernel $\Phi$ by the dimension of the corresponding reproducing kernel Hilbert space (RKHS).
To define the RKHS, we consider the space
$
\mathcal H_\Phi^0\coloneqq \mathrm{span}\{\Phi(\cdot, k):k\in\R^D\}
$
with the bilinear form
\begin{equation}
\left\langle \sum_{m=1}^M a_m \Phi(\cdot, q_m),\sum_{n=1}^N b_n \Phi(\cdot, k_n)\right\rangle_{\mathcal H_\Phi} = \sum_{m=1}^{M}\sum_{n=1}^N a_m b_n \Phi(q_m,k_n).
\end{equation}
Then, the RKHS $\mathcal H_\Phi$ is defined as the completion of $\mathcal H_\Phi^0$ with respect to the norm induced by $\langle\cdot,\cdot\rangle_{\mathcal H_\Phi}$. 
The columns of the Gram matrix $G(\bm q,\bm k)$ can be seen as functions $\Phi(\cdot, k_n)\in \mathcal H_\Phi^0$ evaluated at queries $\bm q$. 
Therefore, the rank of $G(\bm q,\bm k)$ can be bounded via
\begin{equation}
    \mathrm{rank}(G(\bm q,\bm k))\leq \dim(\mathrm{span}\{\Phi(\cdot,k_1),\ldots, \Phi(\cdot,k_N)\})\leq \dim(\mathcal H_\Phi).
\end{equation}
As the attention matrix $A(\bm q,\bm k)$ arises from $G(\bm q,\bm k)$ by rescaling the rows, we also find that \smash{$\mathrm{rank}(A(\bm q,\bm k))=\mathrm{rank}(G(\bm q,\bm k))\leq \mathrm{dim}(\mathcal H_\Phi)$}.
In particular, we have for any $\bm q$ and $\bm k$ by the Eckart-Young theorem \citep{EY1936low_rank} that 
\begin{equation}
    \inf_{\bm q,\bm k\in (\R^D)^N}\|A(\bm q,\bm k)-\mathrm{Id}_N\|_F^2\geq N-\mathrm{dim}(\mathcal H_\Phi).
\end{equation}
Thus, the left hand side can only be zero if $N\leq \mathrm{dim}(\mathcal H_\Phi)$ such that $\mathrm{Cap}(\Phi)\leq \mathrm{dim}(\mathcal H_\Phi)$.
We summarize this result in the following theorem.
\begin{theorem}\label{thm:dim_cap}
Let $\Phi$ be an spd kernel, then it holds $\mathrm{Cap}(\Phi)\leq \mathrm{dim}(\mathcal H_\Phi)$.
\end{theorem}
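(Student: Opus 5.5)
The plan is to turn the heuristic argument before the statement into a rigorous rank bound, and then combine it with a lower bound on the distance from low-rank matrices to $\mathrm{Id}_N$. Set $d=\dim(\mathcal H_\Phi)$. If $d=\infty$ there is nothing to prove, so we assume $d<\infty$ and fix $N>d$. The goal is to show that
\[
\inf_{\bm q,\bm k\in(\R^D)^N}\|A(\bm q,\bm k)-\mathrm{Id}_N\|_F^2>0 .
\]
Then no $N>d$ belongs to the set in the definition of $\mathrm{Cap}(\Phi)$, so its supremum is at most $d$.

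\emph{Step 1: rank bound.} Fix $\bm q,\bm k$, and assume every row sum of $G$ is nonzero, so that $A$ is defined. Since $\Phi$ is spd, $\langle\cdot,\cdot\rangle_{\mathcal H_\Phi}$ is an inner product and we have the reproducing property $f(q)=\langle f,\Phi(\cdot,q)\rangle_{\mathcal H_\Phi}$ for $f\in\mathcal H_\Phi^0$. The $n$-th column of $G$ is the vector $(\Phi(q_m,k_n))_m$, i.e. the function $\Phi(\cdot,k_n)$ evaluated at the queries. So the linear map $f\mapsto (f(q_m))_{m=1}^N$, restricted to $\mathrm{span}\{\Phi(\cdot,k_n)\}\subseteq\mathcal H_\Phi$, has the columns of $G$ in its image. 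Hence $\mathrm{rank}\,G\le d$. Moreover $A=\mathrm{diag}(1/\textstyle\sum_l G_{m,l})\,G$ is obtained by an invertible row scaling, so $\mathrm{rank}\,A=\mathrm{rank}\,G\le d$.

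\emph{Step 2: Eckart--Young.} By the Eckart--Young theorem, for any matrix $B$ with $\mathrm{rank}\,B\le d$,
\[
\|B-\mathrm{Id}_N\|_F^2\ge\sum_{i>d}\sigma_i(\mathrm{Id}_N)^2=N-d .
\]
This bound is uniform in $\bm q,\bm k$, so the infimum is at least $N-d\ge1>0$, which gives $\mathrm{Cap}(\Phi)\le d$.

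\emph{Main obstacle.} The subtle points are all in Step 1. First, we must make sure the rank bound holds with $\mathcal H_\Phi$ (the completion), not just $\mathcal H_\Phi^0$. This is immediate because $\mathcal H_\Phi^0\subseteq\mathcal H_\Phi$, so $\dim\mathcal H_\Phi^0\le d$. Second, degenerate configurations where some row sum vanishes leave $A$ undefined. We either exclude these from the infimum or note that they do not affect the bound.

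I would also sanity-check one point for FFM kernels $\Phi(q,k)=\varphi(q)^\top\varphi(k)$. There $\mathcal H_\Phi$ is spanned by the coordinate functions of $\varphi$, so $d\le L$, which recovers the advertised bound $\mathrm{Cap}(\Phi)\le L$.
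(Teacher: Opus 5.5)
Your proposal is correct and follows essentially the same route as the paper. Both bound $\mathrm{rank}\,G\le\dim\mathrm{span}\{\Phi(\cdot,k_n)\}\le\dim\mathcal H_\Phi$, use that row rescaling preserves rank, and apply Eckart--Young to get $\|A-\mathrm{Id}_N\|_F^2\ge N-\dim\mathcal H_\Phi$; your explicit handling of $\dim\mathcal H_\Phi=\infty$ and of vanishing row sums only makes precise what the paper leaves implicit.
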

The reverse statement is not true.
More precisely, we show in Proposition~\ref{prop:riesz_capacity} that there exist kernels $\Phi$ such that $\mathrm{dim}(\mathcal H_\Phi)=\infty$, but $\mathrm{Cap}(\Phi)<\infty$.

\section{Capacity of quasi linear Attention}
\label{sec:capacity}

In Theorem~\ref{thm:inf_cap}, we have derived a large class of kernels which admits infinite capacity, indicating a high expressivity.
However, the computation of the attention mechanism requires generally $\mathcal O(NM)$ operations, which is a major computational bottleneck.
Therefore, we now focus on kernels for which the attention mechanism can be computed exactly in quasi linear time, i.e., in $\mathcal O((N{+}M)\log^\nu(N{+}M))$ operations for some $\nu>0$.
We will mainly analyze the complexity of computing the kernel sums
\begin{equation}\label{eq:kernel_sum}
    z_m=\sum_{n=1}^{N}\Phi(q_m, k_n)v_n, \qquad m=1,\dots,M,
\end{equation}
for queries $\bm q=(q_m)_{m=1}^M$, keys $\bm k=(k_n)_{n=1}^N$ and values $\bm v=(v_n)_{n=1}^N$.
In fact, from a computational viewpoint, considering the kernel sums is equivalent to considering the attention mechanism, as the normalization can be computed in $\mathcal O(MC)$ operations.
In practice, the time of computing this normalization is dominated by the $\mathcal O((N{+}M)(D{+}C))$ cost of reading the data.

\paragraph{Laplace}
\citet{LW2021FastLaplaceD} propose an algorithm based on \citep{Bently1980} with quasi linear complexity to compute kernel sums for the Laplace kernel $\Phi(q,k)=\exp(-\|q-k\|_1)$, which fulfills $\mathrm{Cap}(\Phi)=\infty$ by Theorem~\ref{thm:inf_cap}.
However, the algorithm relies on a decomposition of the Laplace kernel sum into $2^D$ generalized empirical cumulative distribution functions. 
In particular, the complexity depends exponentially on the dimension.
Thus, for an exemplary head dimension of $D=64$, the quasi linear implementation is only expected to be faster than brute-force for sequence lengths larger than $2^{64} > 10^{19}$, which makes it a theoretical advance rather than a practical method.

\subsection{Finite Feature Maps}

Following the idea of random Fourier features \citep{RR2007RandomFeatures}, several works \citep{PPYSSK2021RFA,performer2021,KVPF_Trans_are_RNNs, sis2021FastWeight} proposed linear attention algorithms by using finite feature maps (FFMs) $\varphi\colon\R^D\to\R^L$.
The main idea is to choose $\Phi(q,k)=\varphi(q)^\top\varphi(k)$, for which the kernel sum can be computed as
\begin{equation}
 z_m=\sum_{n=1}^N \Phi(q_m,k_n)v_n
  = \sum_{n=1}^N \varphi(q_m)^\top\varphi(k_n)v_n
  = \Big(\sum_{n=1}^N v_n\varphi(k_n)^\top\Big)\varphi(q_m).
\end{equation}
The inner sum $\sum_{n=1}^N v_n\varphi(k_n)^\top$ no longer depends on $m$ such that all $z_m$ together can be computed in linear complexity $\mathcal O(LC(N{+}M))$.
\citet{sis2021FastWeight} uses the feature dimension as a measure of capacity.
This is reasonable because
\begin{equation}
H_\Phi^0= \operatorname{span} \{ \Phi(\cdot, k)\mid k\in \R^D\}
=  \operatorname{span}\{\varphi(\cdot)^\top \varphi(k)\mid k\in \R^D\}
\subseteq   \operatorname{span}\{\varphi_1,\ldots, \varphi_L\}.
\end{equation}
Since the completion of a finite dimensional space is the space itself, we also obtain $\mathcal H_\Phi\subseteq \mathrm{span}\{\varphi_1,...,\varphi_L\}$, which has at most dimension $L$. 
Corollary~\ref{corr:ffm_finite_cap} gives the connection to \cite{sis2021FastWeight} as an immediate consequence of Theorem~\ref{thm:dim_cap}.
\begin{corollary}\label{corr:ffm_finite_cap}
Let $\Phi(q,k)=\varphi(q)^\top\varphi(k)$ with FFM $\varphi\colon\R^D\to\R^L$, then $\mathrm{Cap}(\Phi)\leq \mathrm{dim}(\mathcal H_\Phi)\leq L$.
\end{corollary}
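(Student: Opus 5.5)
The plan is to reduce Corollary~\ref{corr:ffm_finite_cap} to Theorem~\ref{thm:dim_cap}. Two things need checking: that $\Phi(q,k)=\varphi(q)^\top\varphi(k)$ falls under the hypotheses of that theorem, and that $\dim(\mathcal H_\Phi)\le L$.

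\textbf{Step 1: $\Phi$ is a symmetric positive semidefinite kernel.} Symmetry holds because $\varphi(q)^\top\varphi(k)=\varphi(k)^\top\varphi(q)$. For positive semidefiniteness, take any points $x_1,\dots,x_N$ and coefficients $a\in\R^N$. Then
\begin{equation*}
\sum_{m,n} a_m a_n \Phi(x_m,x_n)=\Big\|\sum_n a_n\varphi(x_n)\Big\|_2^2\ge 0 .
\end{equation*}
This makes the bilinear form defining $\mathcal H_\Phi$ a well-defined semi-inner product, so Theorem~\ref{thm:dim_cap} applies.

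A caveat: strict positive definiteness generally fails here. For example, if $N>L$, the feature vectors $\varphi(x_1),\dots,\varphi(x_N)$ are linearly dependent. The proof of Theorem~\ref{thm:dim_cap} only uses the rank bound, so I will read "spd" in the semidefinite sense. I will also remark that the rank argument can be run directly, which avoids the issue entirely.

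\textbf{Step 2: the dimension bound.} Each generator $\Phi(\cdot,k)=\sum_{l=1}^L\varphi_l(k)\,\varphi_l(\cdot)$ lies in $\mathrm{span}\{\varphi_1,\dots,\varphi_L\}$. Hence $\mathcal H_\Phi^0$ is finite dimensional, with dimension at most $L$.

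The completion needs a little care, since $\mathcal H_\Phi$ is formally a quotient/completion and not literally a function space. Because $\mathcal H_\Phi^0$ is finite dimensional, its quotient by the null space of the semi-norm is a finite-dimensional normed space. Every finite-dimensional normed space is complete, so the completion adds nothing. Therefore $\dim(\mathcal H_\Phi)\le\dim(\mathcal H_\Phi^0)\le L$.

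\textbf{Step 3: direct route and conclusion.} As a more self-contained alternative, factor the Gram matrix as $G(\bm q,\bm k)=\Psi_q\Psi_k^\top$, where $\Psi_q\in\R^{N\times L}$ has rows $\varphi(q_m)^\top$ and $\Psi_k$ is defined likewise. This gives $\mathrm{rank}(G)\le L$ immediately, and row rescaling preserves rank, so $\mathrm{rank}(A(\bm q,\bm k))\le L$. The Eckart–Young bound then yields $\|A(\bm q,\bm k)-\mathrm{Id}_N\|_F^2\ge N-L$ for every choice of $\bm q,\bm k$. For $N>L$ the infimum is therefore at least $1$, so $\mathrm{Cap}(\Phi)\le L$.

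The main obstacle is this interpretive point about "spd" and the completion, not the computation itself; the rank factorization in Step 3 bypasses it.
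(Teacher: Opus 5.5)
Your proof is correct and is essentially the paper's argument: $\mathcal H_\Phi^0\subseteq\mathrm{span}\{\varphi_1,\dots,\varphi_L\}$, completing a finite-dimensional space adds nothing, and Theorem~\ref{thm:dim_cap} gives the chain of inequalities. Your Step~3 factorization $G(\bm q,\bm k)=\Psi_q\Psi_k^\top$ just inlines, for the feature map, the rank/Eckart--Young reasoning the paper uses to derive that theorem, and your remark that ``spd'' must be read in the semidefinite sense (FFM Gram matrices are singular once $N>L$) is a correct clarification the paper leaves implicit.
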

By Corollary~\ref{corr:ffm_finite_cap} the capacity is bounded through the feature dimension $L$.
At the same time, the computational complexity scales linearly with $L$ such that FFMs with more features are slower to evaluate.
The next proposition verifies that the assumptions of Theorem~\ref{thm:inf_cap} are indeed violated for FFMs in the sense that most FFMs are not stationary and the only stationary FFMs are not $\mathcal C_0$.
\begin{proposition}\label{prop:stationary_FFM}
Let $\Phi(q,k)=\varphi(q)^\top\varphi(k)$ be a continuous and stationary kernel. Then, it holds \smash{$\Phi(q,k)=\sum_{l=1}^L \alpha_l \cos(\omega_l^\top (q-k))$}, where  $\alpha_l\geq0$ and $\omega_l\in \R^D$ for $l=1,\ldots, L$.
In particular, we have that $\Phi$ is a stationary $\mathcal C_0$ kernel if and only if $\Phi=0$.
\end{proposition}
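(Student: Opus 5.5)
We prove the first claim by showing that $\Phi$ is a continuous positive definite stationary kernel whose RKHS is finite dimensional, and then identifying its spectral measure. Write $\Phi(q,k)=F(q-k)$ with $F$ continuous. Since $\Phi(q,k)=\varphi(q)^\top\varphi(k)$, every Gram matrix $(\Phi(x_i,x_j))_{i,j}$ is positive semidefinite, so $F$ is a continuous positive definite function. By Bochner's theorem, $F(x)=\int_{\R^D}\e^{\mathrm{i}\omega^\top x}\d\mu(\omega)$ for a finite nonnegative Borel measure $\mu$. Because $F$ is real and $\Phi$ is symmetric, $F(x)=F(-x)$, and we may replace $\mu$ by its symmetrization. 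We then get $F(x)=\int\cos(\omega^\top x)\d\mu(\omega)$.

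The key step is to show that $\mu$ is supported on finitely many points. By the argument preceding Corollary~\ref{corr:ffm_finite_cap}, $\mathcal H_\Phi^0\subseteq\mathrm{span}\{\varphi_1,\dots,\varphi_L\}$, so the translates $\{F(\cdot-k):k\in\R^D\}$ span a space of dimension at most $L$.

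Suppose $\mu$ had more than $L$ support points (counting $\pm\omega$ as one frequency class for the symmetric measure, and working with the complex exponentials). Choose $L+1$ disjoint open sets $U_1,\dots,U_{L+1}$, each of positive $\mu$-measure. Via the Fourier transform, the translate $F(\cdot-k)$ corresponds to the measure $\e^{-\mathrm{i}\omega^\top k}\d\mu(\omega)$. The span of translates is therefore isomorphic to $\mathrm{span}\{\e^{-\mathrm{i}\omega^\top k}\}_k$ viewed as a subspace of $L^2(\mu)$. This uses uniqueness of Fourier–Stieltjes transforms: $\sum_j c_jF(\cdot-k_j)=0$ holds iff $\sum_j c_j\e^{-\mathrm{i}\omega^\top k_j}=0$ $\mu$-a.e.

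Trigonometric polynomials are dense in $L^2(\mu)$ for a finite measure on $\R^D$, since they are uniformly dense on compacts by Stone–Weierstrass and $\mu$ is tight. Their closure therefore contains the $L+1$ linearly independent indicators $\1_{U_i}$. This contradicts the dimension bound, because a finite-dimensional subspace is closed. Hence $\mu=\sum_j\beta_j\delta_{\omega_j}$ with at most $L$ atoms. After symmetrizing and grouping $\pm\omega_j$ we obtain $\Phi(q,k)=\sum_{l=1}^L\alpha_l\cos(\omega_l^\top(q-k))$ with $\alpha_l\ge0$. This density/uniqueness step is the main technical obstacle. A more elementary alternative I would try first is the following: translation invariance of the finite-dimensional space $V$ spanned by the translates means $F$ solves a finite system of linear differential equations. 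Concretely, $V$ is a finite-dimensional translation-invariant space of continuous functions, hence consists of exponential polynomials, and boundedness ($|F|\le F(0)$) together with positive definiteness then forces pure oscillations with nonnegative weights.

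For the second claim, suppose $\Phi$ is additionally $\mathcal C_0$, so $F(x)\to0$ as $\|x\|_2\to\infty$. The function $F$ is an almost periodic function, being a finite trigonometric sum. Its mean $\lim_{T\to\infty}\frac{1}{(2T)^D}\int_{[-T,T]^D}F(x)^2\d x$ equals $\sum_l\alpha_l^2$, adjusted for repeated or zero frequencies: after merging equal frequencies, the $\omega=0$ term contributes $\alpha^2$ and the other terms contribute $\alpha^2/2$. Decay of $F$ forces this mean to be zero. Hence all merged coefficients vanish and $\Phi=0$. The converse is trivial.
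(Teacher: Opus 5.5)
Your proof is correct in outline, but it reaches the finite-support step by a different route than the paper.

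\textbf{Comparison with the paper.} The paper builds the feature space $H_0=E\oplus O\cong L^2(\xi)$ with feature map $x\mapsto[\cos(\omega^\top x),\sin(\omega^\top x)]$. It uses the feature-space description of the RKHS (Steinwart--Christmann, Thm.~4.21) to obtain a surjection $V\colon H_0\to\mathcal H_\Phi$, and Fourier uniqueness then makes $V$ injective on all of $L^2(\xi)$. This gives $\dim L^2(\xi)=\dim\mathcal H_\Phi<\infty$ with no density argument. You instead stay with the pre-RKHS $\mathcal H_\Phi^0$, whose dimension bound is immediate from $\varphi$. You identify it, via Fourier--Stieltjes uniqueness, with the span of the exponentials $\omega\mapsto\e^{-\mathrm i\omega^\top k}$ inside $L^2(\mu)$, and then need these exponentials to be dense in order to reach all of $L^2(\mu)$. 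Your route avoids RKHS completions and the feature-space theorem, at the cost of a density lemma. For the second claim both arguments use the mean of $F^2$. Your explicit computation after merging $\pm\omega$ is correct and replaces the paper's citation of Shubin.

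\textbf{The density step.} The density lemma is true, but the reason you give for it does not suffice. Uniform approximation on a compact $K$ says nothing about the size of the approximant $p$ on $\R^D\setminus K$. So a small value of $\mu(\R^D\setminus K)$ does not control $\int_{\R^D\setminus K}|p|^2\,\mathrm d\mu$. The same reasoning would show that algebraic polynomials are dense in $L^2(\mu)$ for every finite $\mu$ with all moments finite, which is false for some measures with an indeterminate moment problem.

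You already have the right tool to fix this. Suppose $h\in L^2(\mu)\subseteq L^1(\mu)$ is orthogonal to every $\e^{-\mathrm i\omega^\top k}$. Then the finite complex measure $\bar h\,\mu$ has identically vanishing Fourier--Stieltjes transform, so $h=0$ $\mu$-a.e. Hence the span of the exponentials has trivial orthogonal complement. Being finite-dimensional, it is closed, so it equals all of $L^2(\mu)$.

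Alternatively, you can periodize. Take a cube $[-T,T]^D$ carrying all but $\delta$ of the mass, and a continuous function supported in its interior. Approximate its periodic extension by exponentials with $k\in\frac{\pi}{T}\mathbb Z^D$, uniformly on all of $\R^D$; this keeps the approximants uniformly bounded.

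With either repair, the remaining steps go through. This includes the bound of at most $L$ support points obtained from $L+1$ disjoint neighborhoods of positive measure, and the grouping of $\pm\omega$ into cosine terms.
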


\subsection{Quasi Linear Attention with infinite capacity in One dimension}

There exist several univariate kernels $\phi\colon\R\times\R\to \R$, for which kernel sums can be computed in quasi linear time via sorting.
This involves the Laplace kernel $\phi(s,t)=\exp(-|s-t|)$ \citep{H2021FastExactEval, LW2021FastLaplaceD},
the distance kernel $\phi(s,t)=|s-t|$ \citep{Johannes2024KernelSum, generative_sliced, TSGSW2011dithering} and polynomial exponential kernels $\phi(s,t)=p(|s-t|)\exp(-|s-t|)$ for some polynomial $p$ \citep{H2021FastExactEval}. 
Using that any continuous piecewise linear function $f\colon\R\to\R$ can be represented as the composition of constants, linear terms and absolute values \citep{SGRB2014}, the sorting algorithm for the distance kernel can be deployed for any kernel $\phi(s,t)=f(s-t)$. 
\begin{proposition}\label{prop:spline_is_quasi}
Let $f\colon \R\to\R$ be piecewise linear with finitely many nodes, then $\phi(s,t)=f(s-t)$ is quasi linear.
\end{proposition}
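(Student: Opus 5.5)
Our plan is to reduce the kernel sum for a general piecewise linear $f$ to a fixed number of kernel sums for the distance kernel $|s-t|$, together with trivially linear terms, and then invoke the known sorting algorithm for $|s-t|$ \citep{Johannes2024KernelSum, generative_sliced}. Let the nodes of $f$ be $x_1<\dots<x_J$. We first treat $f$ continuous. Such an $f$ has the explicit representation
\begin{equation*}
f(x)=a+bx+\sum_{j=1}^J c_j\,|x-x_j|,
\end{equation*}
where $c_j$ is half the jump of the slope at $x_j$, and $a,b$ are chosen so that both sides agree. To check it, note that the right hand side is piecewise linear with the same nodes and the same slope jumps, so the difference is affine. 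Matching it on two points then fixes $a,b$. This is the concrete instance of the representation of \citet{SGRB2014}.

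Substituting $x=s-t$ with $s=q_m$ and $t=k_n$, the kernel sum splits as
\begin{equation*}
z_m=a\sum_n v_n+b\Big(q_m\sum_n v_n-\sum_n k_nv_n\Big)+\sum_{j=1}^J c_j\sum_n |(q_m-x_j)-k_n|\,v_n .
\end{equation*}
The first two terms need only the precomputed sums $\sum_n v_n$ and $\sum_n k_nv_n$, so they cost $\mathcal O((N{+}M)C)$. Each term in the last sum is a distance-kernel sum with shifted queries $q_m-x_j$. By the sorting algorithm, each of these costs $\mathcal O((N{+}M)\log(N{+}M)C)$. For that step I would recall the method: sort keys and queries jointly, form prefix sums of $v_n$ and $k_nv_n$, and write $|s-t|=(s-t)$ or $(t-s)$ according to the order. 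Since $J$ is fixed and independent of $N,M$, the total cost is $\mathcal O(J(N{+}M)\log(N{+}M)C)$, which is quasi linear.

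If "piecewise linear" allows discontinuities at the nodes, we additionally need jump terms $d_j\,\mathbf 1[x>x_j]$, and possibly point values at the nodes. The sums $\sum_n \mathbf 1[q_m-k_n>x_j]v_n$ are again computed by sorting the shifted queries against the keys and taking prefix sums of $v_n$. This also costs quasi linear time per node. The main obstacle is bookkeeping rather than conceptual: the representation must be stated carefully with the correct coefficients and the handling of the nodes themselves. The substantive algorithmic ingredient, the $\mathcal O(n\log n)$ distance-kernel sum, is taken from the cited works.
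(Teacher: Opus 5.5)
Your proposal is correct and follows the paper's proof: the same representation $f(x)=a+bx+\sum_j c_j|x-x_j|$, the same split into an $\mathcal O(C(N{+}M))$ affine part plus $J$ shifted distance-kernel sums, and the same handling of discontinuities by sorted prefix sums of $v_n$. The differences are cosmetic. You shift the queries by $-x_j$ where the paper shifts the keys by $+x_j$, and you use indicators $\mathbf 1[x>x_j]$ where the paper uses sign sums; neither of you spells out arbitrary point values at the nodes. The paper additionally shares one sort across all $J$ nodes to reach $\mathcal O((N{+}M)(\log(N{+}M)+CJ))$, a refinement not needed for quasi linearity.
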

This includes the bump kernel $f(u)=\max(0,1-|u|)$ with  alternative expressivity results by \citet[Thm.\ 3.2]{VB2025slicedReLU}.
Both bump and Laplace have infinite capacity, see Theorem~\ref{thm:inf_cap}.
\begin{corollary}\label{cor:LaplaceBump}
Both the bump kernel $\phi(s,t)=\max\{0, 1-|s-t|\}$ and the Laplace kernel $\phi(s,t)=\exp(-|s-t|)$ are quasi linear and have infinite capacity.
\end{corollary}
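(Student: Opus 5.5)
The plan is to derive Corollary~\ref{cor:LaplaceBump} directly from Proposition~\ref{prop:spline_is_quasi}, from the quasi linear algorithms cited above, and from Theorem~\ref{thm:inf_cap} with $D=1$. The corollary has two claims for each kernel, quasi linearity and infinite capacity, and I will check them separately.

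\emph{Quasi linearity.} The bump kernel has $f(u)=\max\{0,1-|u|\}$. This function is continuous and piecewise linear with the three nodes $u\in\{-1,0,1\}$: it vanishes on $(-\infty,-1]$ and on $[1,\infty)$, equals $1+u$ on $[-1,0]$, and equals $1-u$ on $[0,1]$. Proposition~\ref{prop:spline_is_quasi} therefore applies and gives quasi linearity. For the Laplace kernel $\phi(s,t)=\exp(-|s-t|)$, I take the sorting-based algorithm of \citet{H2021FastExactEval} (see also \citet{LW2021FastLaplaceD}). For completeness I would sketch why it works. After sorting the keys and queries jointly, split each kernel sum at the query position:
\[
\sum_{n} \e^{-|q_m-k_n|}v_n=\e^{-q_m}\sum_{k_n\le q_m}\e^{k_n}v_n+\e^{q_m}\sum_{k_n>q_m}\e^{-k_n}v_n .
\]
Both inner sums are prefix or suffix sums along the sorted order, so all $M$ sums cost $\mathcal O((N{+}M)\log(N{+}M))$ operations. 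To avoid overflow in practice, I would accumulate the prefix sums recursively with factors $\e^{-(t_{j+1}-t_j)}$ between consecutive sorted points instead of forming $\e^{\pm k_n}$ directly.

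\emph{Infinite capacity.} Both kernels are stationary with $F(u)=f(u)$ and $F(u)=\e^{-|u|}$ respectively. Each $F$ is continuous, satisfies $F(0)=1>0$, and tends to $0$ as $|u|\to\infty$; the bump function is even identically $0$ for $|u|\ge1$. Theorem~\ref{thm:inf_cap} with $D=1$ then gives $\mathrm{Cap}(\phi)=\infty$ for both. As a sanity check for the bump kernel, the choice $q_n=k_n=2n$ gives $G=\mathrm{Id}_N$ exactly, so $A=\mathrm{Id}_N$ and the infimum is attained.

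I expect no real obstacle, because the corollary is a direct combination of earlier results. The only point needing care is that Theorem~\ref{thm:inf_cap} must be valid for $D=1$, which I assume since its statement does not restrict $D$.
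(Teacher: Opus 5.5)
Your proposal is correct and matches the paper's argument. You get quasi linearity for the bump kernel from Proposition~\ref{prop:spline_is_quasi} and for the Laplace kernel from the sorting and prefix-sum algorithm; your overflow-free recursion is exactly Algorithm~\ref{alg:weighted_laplace_sum_forward}. You get infinite capacity from Theorem~\ref{thm:inf_cap}, whose proof does work for $D=1$, and your choice $q_n=k_n=2n$, which gives $A=\mathrm{Id}_N$ exactly for the bump kernel, is a valid extra check that the argument does not need.
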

On the other hand, the Riesz kernel $r(s,t)=|s|+|t|-|s-t|$ is an example for a kernel with infinite dimensional RKHS and finite capacity.
Since the RKHS of $r$ can be identified with the infinite dimensional homogeneous Beppo Levi space \cite[Prop.~10.39]{Wendland2004}, its attention matrix has no bound on its rank.
Yet, its global structure does not allow to approximate the identity.
\begin{proposition}\label{prop:riesz_capacity}
For the Riesz kernel $\phi(s,t)=|s|+|t|-|s-t|+\varepsilon$ with $\varepsilon>0$, we have $\mathrm{Cap}(\phi)= 2$.
\end{proposition}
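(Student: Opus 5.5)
The plan is to establish the two inequalities $\mathrm{Cap}(\phi)\ge 2$ and $\mathrm{Cap}(\phi)<3$ separately. The second has to be upgraded to all $N\ge 3$. Everything rests on rewriting the kernel. Since $|s|+|t|-|s-t|=2\min(|s|,|t|)$ when $s,t$ have the same sign and $=0$ otherwise, we have
\begin{equation*}
\phi(q,k)=\varepsilon+2\min(q_+,k)\quad (k\ge 0),\qquad \phi(q,k)=\varepsilon+2\min(q_-,|k|)\quad (k<0),
\end{equation*}
where $q_\pm=\max(\pm q,0)$. In particular, $\phi>0$ everywhere, so all attention matrices are well defined. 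Moreover, for fixed $q$, the map $k\mapsto\phi(q,k)$ is nondecreasing on $[0,\infty)$ and nonincreasing on $(-\infty,0)$.

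\emph{Lower bound.} Take $q_1=k_1=-R$ and $q_2=k_2=R$. Then $G=\begin{pmatrix}2R+\varepsilon&\varepsilon\\ \varepsilon&2R+\varepsilon\end{pmatrix}$, so the off-diagonal entries of $A$ equal $\varepsilon/(2R+2\varepsilon)\to 0$ as $R\to\infty$. Hence the infimum is $0$ for $N=2$, and $N=1$ is trivial, giving $\mathrm{Cap}(\phi)\ge 2$.

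\emph{Upper bound for $N=3$.} By pigeonhole, two of the three keys, say $k_a,k_b$, lie in the same half-line $[0,\infty)$ or $(-\infty,0)$. Order them by absolute value: $|k_a|\le|k_b|$, which by monotonicity gives $\phi(q,k_b)\ge\phi(q,k_a)$ for every $q$. Applying this with $q=q_a$ in row $a$ gives $G_{a,b}\ge G_{a,a}$, hence $A_{a,a}\le \tfrac12$. Consequently $\|A-\mathrm{Id}_3\|_F^2\ge (1-A_{a,a})^2\ge \tfrac14$ uniformly in $\bm q,\bm k$, so the infimum is positive.

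\emph{Extension to $N\ge3$.} For $N>3$, the same pigeonhole argument applies directly: the keys fall into two half-lines, so for $N\ge 3$ some pair shares a half-line and the identical bound $\|A-\mathrm{Id}_N\|_F^2\ge\tfrac14$ holds. This is the cleanest route and avoids any submatrix renormalization. The only point needing care is the main obstacle, which is the treatment of the boundary key $k=0$ in the case split. Assigning $0$ to the half-line $[0,\infty)$ works because $\phi(q,0)=\varepsilon\le\phi(q,k)$ for all $k$, so monotonicity on $[0,\infty)$ still holds. Together the two bounds give $\mathrm{Cap}(\phi)=2$.
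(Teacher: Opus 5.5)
Your proof is correct and follows essentially the paper's route. The $N=2$ construction ($\pm R$) is the same, and for $N\ge 3$ you use the same mechanism: monotonicity of $\phi$ in the key variable lets another key weakly dominate the diagonal entry of some row, which forces $A_{a,a}\le\tfrac12$. The paper organizes this through the two extreme keys $\arg\max_n t_n$ and $\arg\min_n t_n$, using that $t\mapsto\phi(s,t)$ is monotone on all of $\R$ for fixed $s$, and gets the bound $\tfrac{N-2}{2}$. Your pigeonhole on half-lines gives the constant $\tfrac14$ instead, which is all the capacity statement needs.
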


\paragraph{Weighted Absolute Value Sums}
For the Riesz and spline kernels, the difficult part is to compute \smash{$\sum_{n=1}^{N} |s_m-t_n|v_n$}, where $s_m\in \R$ and $t_n\in \R$.
This sum can be computed in $\mathcal O((N{+}M)(\log_2 N{+}C))$ by Algorithm~\ref{alg:weighted_abs_sum_forward}.
A similar algorithm can be applied for the one dimensional Laplace kernel.
\begin{algorithm}[t]
\caption{Weighted absolute value sum}
\label{alg:weighted_abs_sum_forward}

\setcounter{algline}{0}
\begin{tabularx}{\linewidth}{@{}r X l l@{}}
\textbf{\#} & \textbf{Step} & \textbf{Work}& \textbf{Memory} \\
\hline

\algrow \textbf{Input}
$\bm s\in \R^M$, $\bm t\in \R^N$ and $\bm v\in \R^{N\times C}$
& -
& $(C{+}1)N{+}M$
\\
\algrow \textbf{Output}
$z_m \coloneqq \sum_{n=1}^{N} |s_m-t_n|v_n\in \R^C \quad m=1,\ldots, M$
& -
& $MC$
\\
\algrow
$\sigma\coloneqq \texttt{argsort}(\bm t)$ s.t. $t_{\sigma(1)}\leq \ldots \leq t_{\sigma(N)}$ 
& $N\log_2 N$
& $N$
\\
\algrow
$p_m \coloneqq \max \bigl(\{n\colon t_{\sigma(n)}\leq s_m\}\cup\{0\}\bigr)$ for $m=1,\ldots, M$ 
& $M\log_2 N$
& $M$
\\
\algrow $\textrm{pref\_t}_n\coloneqq \sum_{j=1}^{n} t_{\sigma(j)}v_{\sigma(j)}\in \R^C$ for $n=0,\ldots, N$ 
& $(N{+}1)C$
& $(N{+}1)C$
\\
\algrow
$\textrm{pref\_v}_n\coloneqq \sum_{j=1}^{n} v_{\sigma(j)}\in \R^C$ for $n=0,\ldots, N$ 
& $(N{+}1)C$
& $(N{+}1)C$
\\
\algrow
$a_m \coloneqq 2\,\textrm{pref\_v}_{p_m}-\textrm{pref\_v}_N\in \R^C$ for $m=1,\ldots,M$
& $MC$
& $MC$
\\
\algrow
$z_m \coloneqq \textrm{pref\_t}_N - 2\,\textrm{pref\_t}_{p_m} +s_m a_m\in \R^C$ for $m=1,\ldots,M$
& $MC$
& $MC$
\\
\algrow \textbf{return}
$\bm z$
&
&
\\
\end{tabularx}
\end{algorithm}
The core idea is that after sorting, the keys below and above $s_m$ separate:
\begin{align}
&
\sum_{n=1}^{N} |s_m-t_n|v_n
= \sum_{n=1}^{N} |s_m-t_{\sigma(n)}|v_{\sigma(n)}\notag\\
=& \sum_{n=1}^{p_m} (s_m-t_{\sigma(n)})v_{\sigma(n)} +  \sum_{n=p_m+1}^{N} (t_{\sigma(n)}-s_m)v_{\sigma(n)}\notag\\
=& s_m\sum_{n=1}^{p_m} v_{\sigma(n)} - \sum_{n=1}^{p_m}t_{\sigma(n)}v_{\sigma(n)} +
\sum_{n=p_m+1}^{N} t_{\sigma(n)} v_{\sigma(n)} - s_m \sum_{n=p_m+1}^{N}v_{\sigma(n)}\notag\\
=& s_m(2\, \text{pref\_v}_{p_m} - \text{pref\_v}_{N}) +  \text{pref\_t}_{N} - 2\, \text{pref\_t}_{p_m}
= z_m. \label{eq:prefix_sums}
\end{align}
The last line \eqref{eq:prefix_sums} depends on the prefix sums over $\bm v_{\sigma}$ and $\bm t_\sigma \bm v_\sigma$, which can be computed with $\mathcal O(NC)$ work.
Once the prefix sums have been computed, the entry at position $p_m$ needs to be extracted, resulting in another $\mathcal O(MC)$ operations.
The total work of the one dimensional absolute value sum reduces to $\mathcal O((N{+}M)(\log_2 (N){+}C))$ operations.
In short, we pay \smash{$\mathcal O((N{+}M)\log_2 N)$} to then be able to compute the rest in $\mathcal O(C(N{+}M))$ instead of $\mathcal O(CNM)$.

Appendix~\ref{app:sec:implementation} explains how we implement Algorithm~\ref{alg:weighted_abs_sum_forward} efficiently on the GPU.
For the multivariate case, a naive PyTorch implementation needs to build huge temporary tensors in steps $5$ to $8$. Meanwhile, we compute steps $5$ to $8$ in a single CUDA kernel and allocate the intermediate tensors \text{pref\_v} and \text{pref\_t} on the fly.
This ensures a low memory profile with good parallelism.
\subsection{Additive Kernels for Quasi Linear Attention with Infinite Capacity}

In order to carry over the one dimensional algorithms from the previous subsection to higher dimensions, we propose to use additive kernels of the form
\begin{equation}\label{eq:additive_kernel}
\Phi\colon\R^D\times\R^D\to\R,\qquad \Phi(q,k)=\sum_{d=1}^D \phi(q_d,k_d)
\end{equation}
for some univariate kernel $\phi\colon\R\times\R\to\R$.
Similar kernels were considered in \cite{QMC_slicing} as quasi-Monte Carlo design for the kernel slicing algorithm; more details are given in Appendix~\ref{app:sec:slicing}.
By definition, the kernel sums for such kernels can be computed as
\begin{equation}
    z_m=\sum_{n=1}^N \Phi(q_m,k_n)v_n=\sum_{d=1}^D\sum_{n=1}^N \phi(q_{m,d},k_{n,d})v_n,
\end{equation}
which has quasi linear complexity if and only if the kernel sum with respect to the univariate kernel $\phi$ has quasi linear complexity.
Even though additive kernels $\Phi$ will never be $\mathcal C_0$ (unless $\Phi=0$), we prove in the next theorem that $\mathrm{Cap}(\phi)=\infty$ implies that $\mathrm{Cap}(\Phi)=\infty$.
\begin{theorem}\label{thm:additive_capacity}
Let $\Phi$ be an additive kernel corresponding to some $\phi$.
Then, we have that $\mathrm{Cap}(\Phi)\geq \mathrm{Cap}(\phi)$.
Moreover, the properties spd, quasi linear and stationary carry over from  $\phi$ to $\Phi$. For the Riesz kernel $\phi=r_\varepsilon$, we have $\mathrm{Cap}(\Phi)\ge 2D$.
\end{theorem}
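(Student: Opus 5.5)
The plan is to handle the four claims separately: the capacity inequality by a diagonal embedding, the three structural properties by direct inspection, and the Riesz bound by an explicit construction.

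\textbf{Capacity inequality.} I would embed the one dimensional configuration diagonally. Take any $N\le\mathrm{Cap}(\phi)$ and a sequence of one dimensional queries $\bm s^{(j)}$ and keys $\bm t^{(j)}$ with $\|A_\phi(\bm s^{(j)},\bm t^{(j)})-\mathrm{Id}_N\|_F\to0$. Define $q_m=(s_m,\dots,s_m)\in\R^D$ and $k_n=(t_n,\dots,t_n)\in\R^D$. Then $\Phi(q_m,k_n)=D\,\phi(s_m,t_n)$. So the Gram matrix of $\Phi$ is $D$ times the Gram matrix of $\phi$. The factor $D$ cancels in the row normalization \eqref{eq:gram_attention}, which gives $A_\Phi(\bm q,\bm k)=A_\phi(\bm s,\bm t)$. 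Hence the infimum for $\Phi$ at length $N$ is also $0$, and $\mathrm{Cap}(\Phi)\ge\mathrm{Cap}(\phi)$. This also covers the case $\mathrm{Cap}(\phi)=\infty$.

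\textbf{Structural properties.} These are routine checks.
\begin{itemize}
\item \emph{spd:} for each $d$, $(q,k)\mapsto\phi(q_d,k_d)$ is spd, since its Gram matrices are Gram matrices of $\phi$ at the $d$-th coordinates. A finite sum of spd kernels is spd.
\item \emph{Stationary:} if $\phi(s,t)=f(s-t)$, then $\Phi(q,k)=F(q-k)$ with $F(x)=\sum_d f(x_d)$.
\item \emph{Quasi linear:} the kernel sum splits into $D$ univariate kernel sums, as in the display before the theorem. The total cost is $D$ times a quasi linear cost, which is still quasi linear in $N+M$ for fixed $D$.
\end{itemize}

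\textbf{Riesz kernel.} This is the main step, because the diagonal embedding only yields $\mathrm{Cap}(\Phi)\ge2$ by Proposition~\ref{prop:riesz_capacity}. I would instead use one coordinate axis per pair of tokens and exploit the structure $r(s,t)=|s|+|t|-|s-t|$. This equals $2\min(|s|,|t|)$ if $st\ge0$ and $0$ otherwise. In particular it vanishes whenever $s=0$, $t=0$, or the signs are opposite.

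For $R>0$ and $d=1,\dots,D$, set $q_{2d-1}=k_{2d-1}=Re_d$ and $q_{2d}=k_{2d}=-Re_d$, where $e_d$ is the $d$-th unit vector. This gives $N=2D$ tokens. Since $\phi=r+\varepsilon$, we get $\Phi(q,k)=\sum_d r(q_d,k_d)+D\varepsilon$, and:
\begin{itemize}
\item on the diagonal, the single active coordinate contributes $r(R,R)=2R$, so $\Phi(q_m,k_m)=2R+D\varepsilon$;
\item off the diagonal, either the tokens are supported on different axes (every term has a zero argument) or on the same axis with opposite signs, so every $r$-term vanishes and $\Phi(q_m,k_n)=D\varepsilon$.
\end{itemize}
Hence
\[
A_{mm}=\frac{2R+D\varepsilon}{2R+2D^2\varepsilon}\to1
\quad\text{and}\quad
A_{mn}=\frac{D\varepsilon}{2R+2D^2\varepsilon}\to0
\quad\text{as } R\to\infty.
\]
Therefore the infimum of $\|A-\mathrm{Id}_{2D}\|_F^2$ is $0$, and $\mathrm{Cap}(\Phi)\ge2D$.
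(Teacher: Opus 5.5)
Your proposal is correct and follows the paper's proof essentially step for step. The steps that match are the diagonal embedding $q_{m,d}=s_m$, $k_{n,d}=t_n$ giving $G_\Phi=D\,G_\phi$ and hence $A_\Phi=A_\phi$, the coordinatewise checks for spd, stationarity and quasi linearity, and the same $\pm R\,e_d$ construction for the Riesz bound (you only index the $2D$ tokens differently); your formula $r(s,t)=2\min(|s|,|t|)$ for $st\ge0$ and $0$ otherwise makes the vanishing of the off-diagonal Gram entries more transparent than the paper's bare assertion.
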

We now combine Corollary~\ref{cor:LaplaceBump} with the arguments in this section to get our main result.
\begin{corollary}\label{corr:add_bump_capacity}
The additive- Laplace and bump kernel defined by $\Phi(q,k)=\sum_{d=1}^D\phi(q_d,k_d)$ with $\phi(s,t)=\exp(-|s-t|)$ and $\phi(s,t)=\max(0,1-|s-t|)$ fulfill $\mathrm{Cap}(\Phi)=\infty$ and corresponding kernel sums can be computed in quasi linear complexity.
\end{corollary}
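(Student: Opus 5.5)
The corollary is essentially a chaining of earlier results, and the plan is to assemble them in three steps.

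First, Corollary~\ref{cor:LaplaceBump} tells us that both univariate kernels $\phi(s,t)=\exp(-|s-t|)$ and $\phi(s,t)=\max\{0,1-|s-t|\}$ are quasi linear and satisfy $\mathrm{Cap}(\phi)=\infty$. If one wants to unpack this: both are stationary with $f(u)=\e^{-|u|}$ and $f(u)=\max\{0,1-|u|\}$. Each $f$ is continuous, vanishes at infinity, and has $f(0)=1>0$, so Theorem~\ref{thm:inf_cap} applies with $D=1$. For the bump kernel, quasi linearity follows from Proposition~\ref{prop:spline_is_quasi}, since $f$ is piecewise linear with nodes $-1,0,1$. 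For the Laplace kernel, it follows from the cited sorting algorithms.

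Second, we apply Theorem~\ref{thm:additive_capacity} to the additive kernel $\Phi(q,k)=\sum_{d=1}^D\phi(q_d,k_d)$. This gives $\mathrm{Cap}(\Phi)\ge \mathrm{Cap}(\phi)=\infty$, hence $\mathrm{Cap}(\Phi)=\infty$. The same theorem transfers quasi linearity from $\phi$ to $\Phi$. Concretely, the kernel sum $z_m=\sum_{d=1}^D\sum_{n=1}^N\phi(q_{m,d},k_{n,d})v_n$ splits into $D$ univariate kernel sums. Each costs $\mathcal O((N{+}M)(\log^\nu(N{+}M)+C))$, so the total is $\mathcal O(D(N{+}M)(\log^\nu(N{+}M)+C))$, which is quasi linear in the sequence length for fixed $D$ and $C$.

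Third, for the attention itself, the normalization $\sum_n\Phi(q_m,k_n)$ is the same kernel sum with $v_n\equiv 1$. It therefore also has quasi linear cost, and the final division costs $\mathcal O(MC)$.

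There is no real obstacle here, since everything reduces to results stated earlier. The only point needing care is that the capacity inequality in Theorem~\ref{thm:additive_capacity} holds for general $\phi$. If I had to verify it, I would embed the univariate construction along the first coordinate: take $q_m=(s_m,0,\dots,0)$ and $k_n=(t_n,0,\dots,0)$. The Gram matrix then becomes $G_\phi+(D-1)\phi(0,0)\1\1^\top$. This constant shift is harmless for $\phi(0,0)>0$ only after rescaling, so instead I would use the remaining coordinates to push the extra terms toward zero. For the $\mathcal C_0$ kernels here this works by placing the other coordinates of queries and keys far apart, e.g. $q_{m,d}=0$ and $k_{n,d}=R\to\infty$ for $d\ge2$.
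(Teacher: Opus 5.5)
Your argument is correct and follows the paper's route: Corollary~\ref{cor:LaplaceBump} gives quasi linearity and infinite capacity of the two univariate kernels, and Theorem~\ref{thm:additive_capacity} transfers both properties to $\Phi$. One caveat on your optional side check: padding the extra coordinates with $q_{m,d}=0$, $k_{n,d}=R\to\infty$ needs $\phi(0,R)\to 0$, so it covers these two kernels but not general $\phi$ as you suggest, whereas the paper's proof of Theorem~\ref{thm:additive_capacity} copies the univariate configuration into every coordinate, $q_{m,d}=s_m$, $k_{n,d}=t_n$, which gives $G_\Phi=D\,G_\phi$ and hence $A_\Phi=A_\phi$ exactly for arbitrary $\phi$.
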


\section{Implementation and Experiments}
\label{sec:numerics}

Next, we investigate our findings numerically by three experiments.
The first one in Section~\ref{subsec:associative_recall} aims to verify theoretical claims from the previous sections.
To this end, we perform an associative-recall experiment, which numerically computes the capacity of the considered kernels.
The second experiment in Section~\ref{subsec:num_runtime} investigates the computation time.
To complement theoretical complexity analysis of the proposed quasi linear kernels, we investigate the ``break-even'' point.
This describes the minimal sequence length $N$ such that our attention algorithm is faster than softmax attention.
In this part, we also comment on possible implementations of attention. 
Finally, the third experiment in Section~\ref{subsec:num_text} evaluates the performance of the kernels based on the text embedding transformer nomic-embed-text-v1 by \cite{NMDM2024nomic}. We include a similar experiment for a vision transformer in Appendix~\ref{app:sec:vision_hyper}.
Table~\ref{tab:kernel_overview} gives an overview of the kernels evaluated in this section.

All experiments use PyTorch 2.13.0+cu129 and KeOps 2.3. While training runs on a multi GPU cluster, all time measurements (Table~\ref{tab:fwd_runtimes} \& \ref{tab:fwdbwd_runtimes} and Figure~\ref{fig:pareto}) were taken on a single NVIDIA GeForce RTX 5090. The code corresponding to our experiments is available on GitHub\footnote{    \url{https://github.com/Nicolaj-Rux/quasi-linear-kernel-attention}}.
\begin{table}
    \centering
    \caption{For additive kernels, we state the univariate $\phi(s,t)$.
    \texttt{dpfp} is described in Appendix~\ref{app:sec:FFM} and the remaining kernels are defined for $q,k\in \R^D$.
    All kernels except \texttt{tri} are nonnegative.
    \texttt{riesz} and \texttt{add\_riesz} use $\varepsilon=10^{-3}$ and are \emph{partially} stationary, i.e., stationary up to terms depending only on $q$ or only on $k$.
    The last column is the best known kernel sum complexity in $N{+}M$.
    }
    \scriptsize 
    \begin{tabular}{rlrrrlr}
        \toprule 
         Name                    & Formula                                      &    dim   & stationary & additive & capacity  & complexity   \\
         \midrule 
         \texttt{softmax}        & $\e^{q^\top k}$                              & $\infty$ & no         & no       & $\infty$  & quadratic    \\
         \texttt{gauss}          & \smash{$\e^{-\frac12 \|q-k\|_2^2}$}          & $\infty$ & yes        & no       & $\infty$  & quadratic    \\
         \texttt{laplace}        & \smash{$\e^{-\|q-k\|_1}$}                    & $\infty$ & yes        & no       & $\infty$  & quasi linear \\
         \texttt{riesz}          & $\|q\|_2+\|k\|_2-\|q-k\|_2+\varepsilon$      & $\infty$ & partially  & no       & $-$       & quadratic    \\
         \texttt{add\_riesz}     & $|s|+|t|-|s-t|+\varepsilon$                  & $\infty$ & partially  & yes      & $\geq 2D$       & quasi linear \\
         \texttt{add\_laplace}   & \smash{$\e^{-|s-t|}$}                        & $\infty$ & yes        & yes      & $\infty$  & quasi linear \\
         \texttt{add\_bump}      & $\max\{0, 1-|s-t|\}$                         & $\infty$ & yes        & yes      & $\infty$  & quasi linear \\
         \texttt{tri}            & $\cos(s-t)$                                  & $2D$     & yes        & yes      & $\leq 2D$     & linear       \\
         \texttt{relu}           & $\mathrm{ReLU}(s)\mathrm{ReLU}(t)$           & $D$      & no         & yes      & $\leq D$      & linear       \\
         \texttt{elu}            & $(\mathrm{elu}+1)(s)(\mathrm{elu}+1)(t)$     & $D$      & no         & yes      & $\leq D$      & linear       \\
         \texttt{dpfp}           & Appendix~\ref{app:sec:FFM}               & $6D$     & no         & no       & $\leq 6D$     & linear       \\
         \bottomrule
    \end{tabular}
    \label{tab:kernel_overview}
\end{table}

\paragraph{Bandwidth}
The bandwidth is set via $\Phi_\tau(q,k)\coloneqq \Phi(\nicefrac{q}{\tau}, \nicefrac{k}{\tau})$, with $\tau=D^{\nicefrac{1}{4}}$ as default for \texttt{softmax}, \texttt{gauss} and \texttt{tri}. Since both text and vision transformer use head dimension $D=64$, we use $\tau\approx2.828$.
We also tested other bandwidths but found $\tau=D^{\nicefrac{1}{4}}$ to be near optimal in both the text and vision transformer. 
Similarly, $\tau=1.5$ for \texttt{add\_bump}, $\tau=0.5$ for \texttt{add\_laplace} and $\tau = 6$ for \texttt{laplace} work well on both tasks.
For $\varepsilon=0$, the kernels \texttt{riesz}, \texttt{add\_riesz}, \texttt{relu} and \texttt{dpfp} are scale invariant, i.e., $\Phi_\tau=\tau^\alpha \Phi_1$ for all $\tau>0$ and some $\alpha>0$.
Therefore, the scale cancels out within attention and for $\varepsilon>0$ it merely rescales $\varepsilon$, so no tuning is necessary.
As the only not nonnegative kernel, \texttt{tri} causes numerical instability during training.
To stabilize this, we clamp the attention normalization at $10^{-6}$ and use gradient clipping to avoid overshooting.

\subsection{Associative recall capacity}
\label{subsec:associative_recall}

Here, we assess the capacity of various kernels by an associative-recall experiment similar to \citet[Sec.~6.1]{sis2021FastWeight}.
More precisely, we start with learnable tokens $\bm u=(u_n)_{n=1}^N\subseteq\R^D$ and define keys and queries as $k_n=W_K u_n$ and $q_n=W_Q u_n$ for learnable matrices $W_Q,W_K\in\R^{D\times D}$. Then, for values $v_1,\ldots,v_N\in \R^N$ chosen as the unit vectors, we minimize the loss function
\begin{equation}\label{eq:loss_function}
\mathcal L(\bm u, W_Q,W_K)=\mathbb{E}_{\sigma,\xi}\left[\sum_{n=1}^N \|v_{\xi(n)}- y_{\sigma(n)}\|^2\right],
\end{equation}
where $y_m =A(\bm q, \bm k_\sigma)\bm v_{\xi} = \sum_{l=1}^N A_{m,l}(\bm q,\bm k_\sigma) v_{\xi(l)}$ with $\bm k_\sigma = (W_K u_{\sigma(n)})_{n=1}^N$ is the attention vector as defined in \eqref{eq:kernel_attention} and $\sigma$ and $\xi$ are uniformly randomly drawn permutations.
The loss does not depend on $\sigma$ and $\xi$ as the following lemma shows.
\begin{lemma}\label{lem:recall_frobenius}
For all  permutations $\sigma$ and $\xi$ it holds
\begin{equation}
    \sum_{n=1}^N \|v_{\xi(n)} - y_{\sigma(n)}\|^2_2=\|A(\bm q,\bm k)-\mathrm{Id}_N\|_F^2,
\qquad \bm{k}\coloneqq(W_Ku_n)_{n=1}^N.
\end{equation}
\end{lemma}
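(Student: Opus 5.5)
The plan is a direct computation: express each $y_{\sigma(n)}$ in the basis of unit vectors, use that $\xi$ permutes this basis, and reindex the resulting double sum by $\sigma$. Write $A\coloneqq A(\bm q,\bm k)$ with $\bm k=(W_Ku_n)_{n=1}^N$ and $\bm q=(W_Qu_n)_{n=1}^N$. Note that only the keys are permuted, not the queries.

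\emph{Step 1: relate $A(\bm q,\bm k_\sigma)$ to $A$.} Since $(\bm k_\sigma)_l=k_{\sigma(l)}$, the Gram matrix satisfies $G(\bm q,\bm k_\sigma)_{m,l}=\Phi(q_m,k_{\sigma(l)})=G_{m,\sigma(l)}$, so its columns are a permutation of those of $G$. In particular, each row sum $\sum_l G(\bm q,\bm k_\sigma)_{m,l}=\sum_l G_{m,l}$ is unchanged. Hence $A(\bm q,\bm k_\sigma)_{m,l}=A_{m,\sigma(l)}$.

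\emph{Step 2: expand $y_{\sigma(n)}$ and the error.} By the definition of $y_m$ and Step 1,
\begin{equation*}
y_{\sigma(n)}=\sum_{l=1}^N A_{\sigma(n),\sigma(l)}\, v_{\xi(l)} .
\end{equation*}
Writing the target as $v_{\xi(n)}=\sum_{l}\delta_{n,l}v_{\xi(l)}$ gives
\begin{equation*}
v_{\xi(n)}-y_{\sigma(n)}=\sum_{l=1}^N\bigl(\delta_{n,l}-A_{\sigma(n),\sigma(l)}\bigr)v_{\xi(l)} .
\end{equation*}
Because the $v_j$ are the unit vectors and $\xi$ is a bijection, $\{v_{\xi(l)}\}_{l=1}^N$ is again an orthonormal basis of $\R^N$. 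Parseval then yields
\begin{equation*}
\|v_{\xi(n)}-y_{\sigma(n)}\|_2^2=\sum_{l=1}^N\bigl(A_{\sigma(n),\sigma(l)}-\delta_{n,l}\bigr)^2 .
\end{equation*}

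\emph{Step 3: reindex.} Since $\sigma$ is injective, $\delta_{n,l}=\delta_{\sigma(n),\sigma(l)}$. Summing over $n$ and substituting $i=\sigma(n)$, $j=\sigma(l)$ (a bijection of $\{1,\dots,N\}^2$) gives
\begin{equation*}
\sum_{i,j=1}^N\bigl(A_{i,j}-\delta_{i,j}\bigr)^2=\|A-\mathrm{Id}_N\|_F^2 ,
\end{equation*}
which is the claim.

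I expect the only delicate point to be Step 1, namely checking carefully which index of $A$ the permutation $\sigma$ acts on. The normalization must be invariant under the column permutation, which it is because the row sums do not change. The queries are unpermuted, which is exactly why evaluating at $y_{\sigma(n)}$ realigns the row index with the permuted columns. Everything else is bookkeeping with orthonormal bases.
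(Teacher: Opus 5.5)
Your proposal is correct and follows the paper's proof essentially step for step. Both arguments use the column-permutation identity $A_{m,l}(\bm q,\bm k_\sigma)=A_{m,\sigma(l)}(\bm q,\bm k)$, expand $v_{\xi(n)}-y_{\sigma(n)}$ in the orthonormal basis $(e_{\xi(l)})_{l=1}^N$, and reindex via $m=\sigma(j)$, $n=\sigma(l)$.
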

While \citet{sis2021FastWeight} phrase this as a retrieval task, we show in Appendix~\ref{app:sec:associative_retrival} that it can be related to our capacity notion in the sense that 
\begin{equation}\label{eq:loss_bound}
    \inf_{\bm u, W_Q,W_K}\mathcal L(\bm u, W_Q,W_K)\geq \inf_{\bm q,\bm k} \|A(\bm q,\bm k)-\mathrm{Id}_N\|_F^2.
\end{equation}
In particular, we have that $\inf_{\bm u, W_Q,W_K}\mathcal L(\bm u, W_Q,W_K)=0$ only if $\mathrm{Cap}(\Phi)\geq N$.

\begin{figure}[t]
    \centering
    \includegraphics[width=\linewidth]{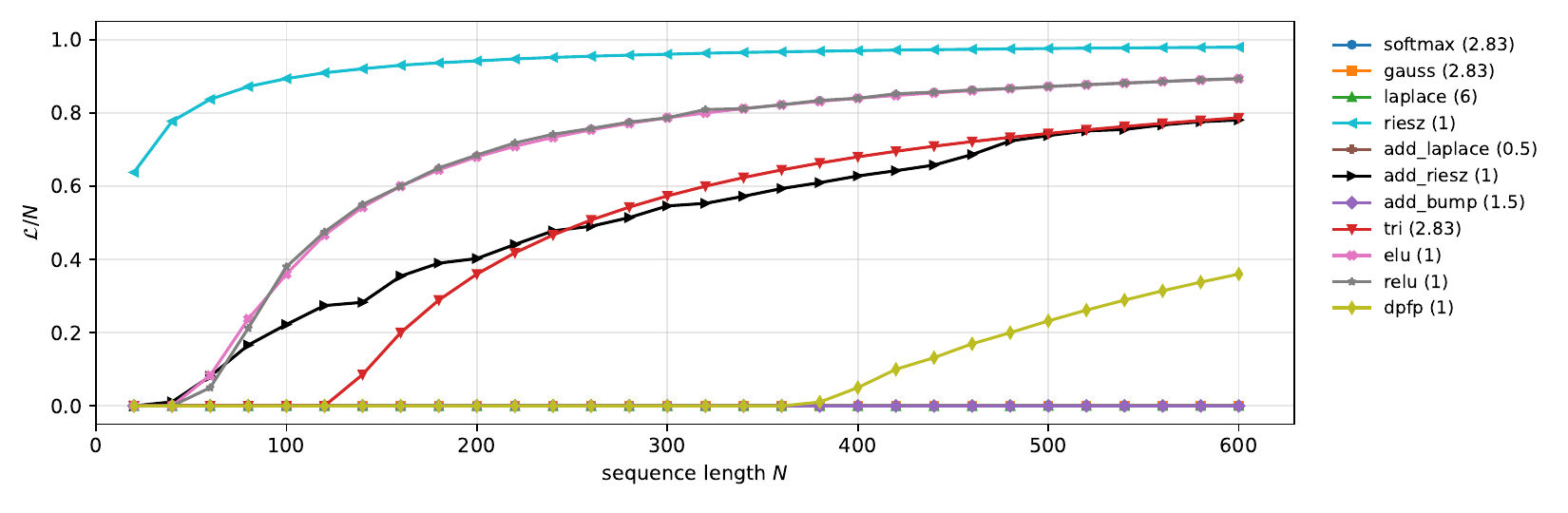}
    \caption{Associative-recall: Trained full-batch with Adam (lr $0.03$) for at most $120k$ steps, stopping early once the loss falls below $10^{-4}$, $D = 64$, where  the tokens are initialized as standard normal and $W_Q = W_K = I$. Curves show the best loss over $3$ seeds. Scale $\tau$ in brackets behind  kernel name.}
    \label{fig:associative_recall}
\end{figure}

In Figure~\ref{fig:associative_recall}, we plot the minimal value of $\mathcal L(\bm u, W_Q,W_K)$ after 120k optimization steps on $W_Q$, $W_K$ and $\bm u$ for the kernels of Table~\ref{tab:kernel_overview}. 
We observe that all FFM kernels roughly achieve a loss of zero for sequence lengths $N < L$ and end up with a strictly positive loss for $N > L$.
This matches Corollary~\ref{corr:ffm_finite_cap} stating that $\mathrm{Cap}(\Phi)\leq L$.
On the other hand, the stationary $\mathcal C_0$ kernels \texttt{gauss} and \texttt{laplace} succeed in this task as predicted by Theorem~\ref{thm:inf_cap}.
In alignment with Corollary~\ref{corr:add_bump_capacity}, \texttt{add\_laplace} and \texttt{add\_bump} also pass the task up to $N=600$.
While Theorem~\ref{thm:additive_capacity} guarantees $\mathrm{Cap}(\Phi)\geq128$ for \texttt{add\_riesz}, the optimization already fails around $N=40$, and \texttt{riesz} performs worst throughout.

\subsection{Implementations and Speed benchmarks}
\label{subsec:num_runtime}

Now, we compare the runtime of the different backends of the attention algorithm.
We already know that for $M=N$ softmax attention has complexity $\mathcal O(N^2)$, while our kernels have complexity $\mathcal O(N\log N)$.
Hence, for long enough sequence lengths, quasi linear kernels should eventually become faster.
Thus, we are interested in the ``break-even'' point, i.e., the minimal number $N$ such that the quasi linear algorithms are faster than the standard softmax attention. 
For the quasi linear algorithms, we provide backends for \texttt{add\_laplace} and the distance kernel. The latter can be used to compute all spline kernels and we choose \texttt{add\_riesz} as a simple representative.
Here, we consider the following backends.
\begin{itemize}[noitemsep, topsep=0pt]
    \item \texttt{GLOBAL}: Sorting $\bm q, \bm k$ and computing the prefix sums involves two separate CUDA kernels. Same structure is implemented for both \texttt{add\_riesz} and \texttt{add\_laplace}.
    \item \texttt{FUSED}: all operations are in a single kernel; this is limited by memory to $N\leq 4096$.
    \item \texttt{KEOPS}: a GPU-friendly $\mathcal O(N^2)$ reference \citep{CFGCD2021KeOps}.
\end{itemize}
We provide implementation details in Appendix~\ref{app:sec:implementation}.
All backends of the quasi linear attention algorithm run with \texttt{fp32} precision. While an \texttt{fp16} implementation is algorithmically possible and can lead to further speedups, it requires substantial code engineering, which goes beyond the scope of this paper.
We comment in Appendix~\ref{app:subsec:causal_masking} on this issue.
We compare the evaluation times of this kernel attention with the standard softmax attention.
Here, we consider the softmax backends:
\begin{itemize}[noitemsep, topsep=0pt]
    \item \texttt{MEM\_EFF32} and \texttt{MEM\_EFF16} \citep{xFormers2022} in \texttt{fp32} and \texttt{fp16} precision.
    \item \texttt{FLASH16} and \texttt{CUDNN16} \citep{dao2024flash2} using \texttt{fp16} precision.
\end{itemize}
Our \texttt{weighted\_abs\_sum} supports gradients and padding, such that all spline kernels can be used in both training and inference.
Quasi linear causal attention is possible and explained in Appendix~\ref{app:subsec:causal_masking} but not implemented. The \texttt{add\_laplace} is forward only.

\begin{table}
    \centering
    \caption{Forward-pass of kernel attention \eqref{eq:kernel_attention}  with runtime in milliseconds, mean over 10 runs after 5 warm-up iterations. Relative standard deviations are below $18\%$ for $N\leq 512$, below $4\%$ at $N=1024$ and below $3\%$ for $N\geq2048$. Bold marks the fastest method overall; underline marks the fastest among the five \texttt{fp32} methods. Shape: $B=4$, $H=12$, $D=64$, $C=64$, $M=N$.}
    \scriptsize
    \begin{tabular}{rcccccccc}
    \toprule
    & \texttt{add\_laplace} \texttt{fp32} & \multicolumn{3}{c}{\texttt{add\_riesz} \texttt{fp32}} & \multicolumn{4}{c}{\texttt{softmax}} \\
    \cmidrule(lr){2-2}\cmidrule(lr){3-5}\cmidrule(lr){6-9}
    $N$ & \texttt{GLOBAL} & \texttt{GLOBAL} & \texttt{FUSED} & \texttt{KEOPS} & \texttt{MEM\_EFF32} & \texttt{MEM\_EFF16} & \texttt{FLASH16} & \texttt{CUDNN16} \\
    \midrule
    $128$    & $0.442$ & $0.455$ & $0.305$ & $0.381$ & $\underline{0.029}$ & $\mathbf{0.021}$ & $\mathbf{0.021}$ & $0.023$ \\
    $256$    & $0.575$ & $0.539$ & $0.353$ & $0.425$ & $\underline{0.058}$ & $0.031$ & $\mathbf{0.022}$ & $0.030$ \\
    $512$    & $0.824$ & $0.694$ & $0.478$ & $0.688$ & $\underline{0.144}$ & $0.058$ & $\mathbf{0.041}$ & $0.043$ \\
    $1024$   & $1.305$ & $1.014$ & $0.719$ & $1.648$ & $\underline{0.431}$ & $0.152$ & $\mathbf{0.093}$ & $0.119$ \\
    $2048$   & $2.260$ & $1.690$ & $\underline{1.278}$ & $4.754$ & $1.645$ & $0.537$ & $0.266$ & $\mathbf{0.265}$ \\
    $4096$   & $4.472$ & $3.583$ & $\underline{2.991}$ & $17.14$ & $6.219$ & $1.948$ & $1.016$ & $\mathbf{1.001}$ \\
    $8192$   & $9.109$ & $\underline{7.336}$ & -- & $63.94$ & $24.12$ & $7.424$ & $3.780$ & $\mathbf{3.726}$ \\
    $16384$  & $20.31$ & $\underline{16.86}$ & -- & $249.1$ & $94.95$ & $29.57$ & $14.99$ & $\mathbf{14.90}$ \\
    $32768$  & $42.07$ & $\mathbf{\underline{34.88}}$ & -- & $985.6$ & $379.4$ & $118.8$ & $59.79$ & $60.11$ \\
    $65536$  & $86.94$ & $\mathbf{\underline{73.76}}$ & -- & $3877$ & $1523$ & $478.1$ & $239.2$ & $241.3$ \\
    $131072$ & $\mathbf{\underline{190.1}}$ & $200.0$ & -- & $15608$ & $6118$ & $1930$ & $968.8$ & $972.1$ \\
    \bottomrule
    \end{tabular}
    \label{tab:fwd_runtimes}
\end{table}

We report the runtime of a forward pass of \eqref{eq:kernel_attention} in Table~\ref{tab:fwd_runtimes}. Appendix~\ref{app:sec:implementation} covers the backward-pass with similar conclusions.
At equal precision \texttt{FUSED} becomes faster than \texttt{MEM\_EFF32} at $N\approx 2048$.
For $N=131072$, \texttt{GLOBAL} is roughly $31\times$ faster.
The \texttt{fp16} backends are $2$--$6\times$ faster than their \texttt{fp32} counterparts, so the break-even against \texttt{FLASH16} occurs slightly beyond $N=16384$.
We view this as an engineering rather than a conceptual gap: the \texttt{fp16} backends exploit tensor cores and years of kernel tuning, whereas a flash-style \texttt{fp16} tiling of the quadratic regime combined with our quasi linear kernel is possible in principle and would move the crossover accordingly.
Regarding memory, our backend allocates roughly $3\times$ the forward peak memory of \texttt{MEM\_EFF32}, as it must hold sorted copies of the queries, keys and their sorting permutations.
Unlike a naive PyTorch implementation of Algorithm~\ref{alg:weighted_abs_sum_forward}, which must materialize the prefix sums of size $\mathcal O(PDNC)$, we compute the prefix sums on the fly, such that the overhead reduces to $\mathcal O(PDN)$.

\subsection{Text embedding transformer}
\label{subsec:num_text}

Finally, we evaluate the performance of the quasi linear kernels on a text embedding transformer.
To this end, we consider the sentence embedder nomic-embed-text-v1 \citep{NMDM2024nomic} ($12$ layers, $12$ heads, $D=64$, rotary positional embeddings (RoPE)).

\paragraph{Setup and Training}
To avoid the computationally expensive and hyperparameter-sensitive training procedure, we start with a pretrained teacher model using softmax attention and distill student models using other kernels from this teacher model. Here, we proceed in three phases:
\begin{enumerate}[noitemsep, topsep=0pt]
    \item \textbf{Initialization:} We initialize the student model by copying the weights of the teacher model.
    \item \textbf{Layer-by-layer distillation:} For each attention layer, we match the MSE between the softmax teacher and the student.
    \item \textbf{End-to-end distillation:} We minimize the MSE between the end-to-end application of the teacher and student model. 
\end{enumerate}

For both distillations, we use \texttt{nomic-ai/nomic-embed-unsupervised-data}, specifically the \texttt{reddit\_title\_body} split.
Texts are truncated to $512$ tokens, batched in groups of $16$, and decorated with one of the task prefixes \texttt{classification}, \texttt{clustering}, \texttt{search\_document} and \texttt{search\_query}, sampled with probabilities $0.45$, $0.1$, $0.35$ and $0.1$.
In the first distillation, each layer is trained for $10{,}000$ steps with Adam, with a learning rate
cosine-annealed from $5\cdot10^{-4}$ to $5\cdot10^{-5}$.
In the second distillation, all parameters are trained for
$10{,}000$ steps with Adam, with a linear warmup over the first $10\%$ of steps to $5\cdot10^{-5}$, followed by
cosine annealing to $10^{-5}$.

\begin{figure}
\vspace{-.5cm}

\begin{minipage}{.48\linewidth}
\begin{table}[H]
    \centering
\caption{Results after full two-stage distillation for MTEB and LoCo.
Best performance among quasi linear kernels is underlined.}
    \resizebox{\textwidth}{!}{%
    \begin{tabular}{llcccc}
    \toprule
     &  & MTEB $\uparrow$ & \multicolumn{3}{c}{LoCo (nDCG@10) $\uparrow$}  \\
    \cmidrule(lr){3-3}
    \cmidrule(lr){4-6}
    Kernel & scale $\tau$ & $\leq 512$ & $2048$ & $4096$ & $8192$ \\
    \midrule
    \texttt{softmax} (teacher) & $2.828$ & $64.41$ & $0.873$ & $0.882$ & $0.883$ \\
    \texttt{gauss}       & $2.828$ & $64.41$ & $0.875$ & $0.884$ & $0.886$ \\
    \texttt{laplace}     & $6.0$     & $64.38$ & $0.885$ & $0.891$ & $0.887$ \\
    \texttt{riesz}       & $1.0$   & $61.69$ & $0.805$ & $0.839$ & $0.858$ \\\midrule
    \texttt{add\_riesz}  & $1.0$   & $61.72$ & $0.769$ & $0.813$ & $0.832$ \\
    \texttt{add\_laplace}& $0.5$   & $62.12$ & \underline{$0.830$} & \underline{$0.845$} & $0.858$ \\
    \texttt{add\_bump}   & $1.5$   & $62.12$ & $0.829$ & $0.843$ & \underline{$0.859$} \\
    \texttt{tri}         & $2.828$ & $61.53$ & $0.829$ & $0.838$ & $0.851$ \\
    \texttt{relu}        & $1.0$   & $62.29$ & $0.472$ & $0.516$ & $0.556$ \\
    \texttt{elu}         & $1.0$   & $61.55$ & $0.699$ & $0.773$ & $0.794$ \\
    \texttt{dpfp}        & $1.0$   & \underline{$63.42$} & $0.681$ & $0.811$ & $0.823$ \\
    \bottomrule
    \end{tabular}
    }
    \label{tab:text}
\end{table}
\end{minipage}\hfill
\begin{minipage}{.49\linewidth}
\begin{figure}[H]
    \centering
    \includegraphics[width=\linewidth]{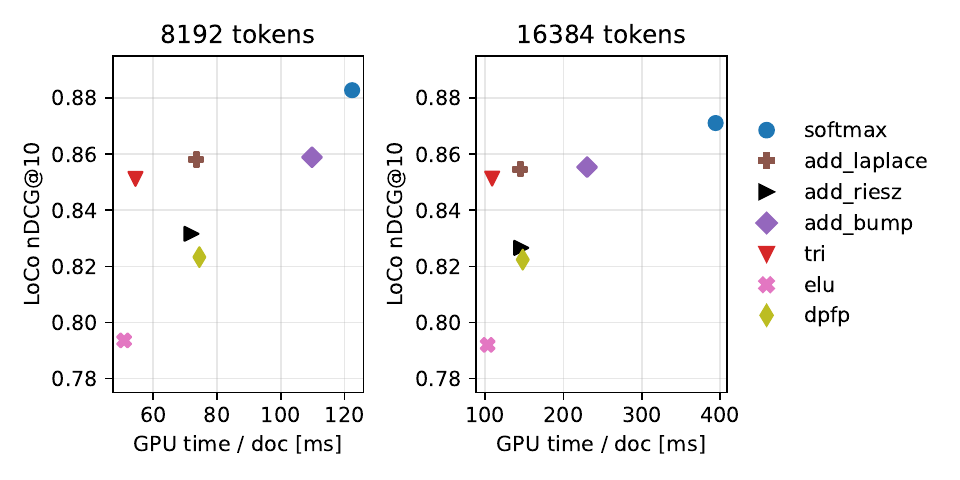}

    \vspace{-.5cm}
    \caption{LoCo retrieval quality against encoding cost at $8192$ and $16384$ tokens.
    Each point is one kernel: $y$ is the mean nDCG@10 over five LoCo tasks, $x$ is GPU time per document.
    }
    \label{fig:pareto}
\end{figure}
\end{minipage}
\end{figure}

\paragraph{Evaluation and Results} We evaluate on two regimes: the MTEB(eng, v2) benchmark \citep{MTMR2023mteb}, an average over $41$ datasets whose documents are short ($N\leq 512$), and LoCo long-context retrieval \citep{SFAGR2024loco} (nDCG@10, averaged over $5$ subsets) at lengths $2048$, $4096$ and $8192$, well beyond the training length of $512$.

We observe that \texttt{gauss} and \texttt{laplace} fully recover the teacher.
The Riesz kernel and the quasi linear kernels moderately lose performance.
The FFM kernels work well for the short sequences from the MTEB benchmark, but mostly degrade in performance for the LoCo.
Only the \texttt{tri} kernel achieves reasonable results.
On the other hand, the infinite capacity kernels \texttt{add\_bump} and \texttt{add\_laplace} remain close to the \texttt{softmax} teacher model. 
Moreover, on LoCo, stationary kernels tend to extrapolate better than non-stationary ones.
This may be due to RoPE (with NTK scaling), which is designed for rotation-invariant kernels such as \texttt{softmax}.
While \texttt{laplace}, \texttt{tri} and our additive kernels are not rotation invariant, they remain stationary, whereas the FFM kernels \texttt{relu}, \texttt{elu} and \texttt{dpfp} are neither.
For this retrieval task, which relies on few relevant tokens and mean pooling, stationarity appears to matter more for extrapolation than capacity.

Further, we evaluate the time-performance trade-off. More precisely, we plot in Figure~\ref{fig:pareto} the performance on LoCo versus the required evaluation time for each kernel.
We consider both sequence lengths of $8192$ and $16384$, where longer sequences are truncated and shorter sequences are padded.
We can see that for long sequences the \texttt{tri} and \texttt{add\_laplace} kernels both significantly improve the computation time compared to the softmax kernel with a moderate loss of performance. Here, the \texttt{tri} kernel is slightly faster while the \texttt{add\_laplace} kernel has a slightly better performance. 

\section{Conclusions, Limitations and Future Work}
We analyzed kernel attention to identify kernels that are both expressive and enable quasi linear computation.
Expressivity is measured through the maximum sequence length for which the attention matrix can approximate the identity (capacity).
We showed that expressive kernels such as softmax, Gauss and Laplace have infinite capacity while finite feature map kernels saturate.
By leveraging additive kernels and one dimensional sorting, we identified a class of kernels with infinite capacity and quasi linear complexity.
Experiments confirm that our CUDA-based kernel attention is faster than softmax attention for sequence lengths exceeding $2048$ elements in \texttt{fp32}.

While our capacity metric provides a rigorous theoretical framework for comparing kernel attention, it captures only one aspect of expressiveness.
Theoretically, kernel attention with higher capacity can represent classes of functions that cannot be represented with lower capacity.
However, the full expressive power of transformers is influenced by additional properties beyond capacity, such as stationarity and locality.
In the long term, an \texttt{fp16} implementation, leveraging tensor cores and optimized memory access patterns, could yield significant additional speedups, particularly for long sequences.
Further, we intend to extend our implementation to causal attention and to evaluate our method on larger-scale benchmarks to validate its scalability and performance.

\subsection*{AI use statement}
We did not use generative AI for any tasks with required disclosure: all research ideas, methodology, experimental design, and interpretation of results are the authors' own work, as is the design of the GPU implementation, including the parallelization strategy, the memory and thread layout and the kernel fusion; synthetic data generation, translation, and dataset cleaning are not applicable. Starting from a complete, author-written implementation of this design, we used generative AI tools to extend and clean up the code and repository, to unify naming and structure, and to apply hardware-specific low-level optimizations; these changes affect how the computation is executed on the device rather than the algorithm itself, they did improve performance, and the runtimes we report are those of the resulting code. We also used generative AI tools to identify related work and candidate citations. All AI-modified code was reviewed by the authors and verified for numerical agreement with the author-written reference implementation before being re-tested and re-benchmarked, and all AI-suggested references were verified against the original sources. The paper was written by the authors, with generative AI used to polish wording. We take responsibility for the final content of this work, including text, claims or artifacts produced with the aid of generative AI.

\subsection*{Reproducibility statement}
All required datasets and models are openly available from the literature.
The source code for our experiments is available at:
\begin{center}
    \url{https://github.com/Nicolaj-Rux/quasi-linear-kernel-attention}
\end{center}

\subsection*{Acknowledgments}
JH acknowledges funding from the German Research Foundation (DFG) within project no 530824055 and 572825596. NR acknowledges funding from the European Union and the Free State of Saxony through the European Social Fund (ESF). We are grateful to Michel Steuwer, Nicole Thalia Heinimann, and Dominic Arne Stöcker for their guidance on the CUDA implementation, and to Paul Hagemann for valuable advice on model training.

\bibliography{iclr2027_conference}
\bibliographystyle{iclr2027_conference}

\appendix

\section{Background on FFMs}
\label{app:sec:FFM}
Here, we discuss the feature maps proposed by \cite{KVPF_Trans_are_RNNs, performer2021, PPYSSK2021RFA, sis2021FastWeight}.
In \cite{KVPF_Trans_are_RNNs}, the feature map $\varphi\colon \R^D\to \R^D$ with $\varphi(x)\coloneqq [\mathrm{elu}(x_1)+1,\ldots, \mathrm{elu}(x_D)+1]$ is used.
Later, \cite{PPYSSK2021RFA} used Bochner's theorem to construct a feature map that approximates the Gauss kernel via
\begin{equation}
\varphi_{\cos}\colon \R^D\to \R^{2L}, \varphi_{\cos}(x)\coloneqq \nicefrac{1}{\sqrt{L}}\bigl[\sin(\omega_1^\top x),\ldots, \sin(\omega_{L}^\top x),\cos(\omega_1^\top x),\ldots, \cos(\omega_{L}^\top x)\bigr],
\end{equation}
where $\omega_1,\ldots, \omega_{L}\sim \mathcal N(0,\tau I_D)$ are sampled i.i.d.
The associated kernel is then given as
\begin{align}
\varphi_{\cos}(q)^\top \varphi_{\cos}(k) &= \frac{1}{L}\sum_{l=1}^{L} \bigl(\sin(\omega_l^\top q)\sin(\omega_l^\top k)+\cos(\omega_l^\top q)\cos(\omega_l^\top k)\bigr) \notag \\
& =\frac{1}{L}\sum_{l=1}^{L} \cos\bigl(\omega_l^\top (q-k)\bigr).
\end{align}
Now, Bochner's theorem implies for $z=q-k$ the relation
\begin{align}\label{eq:bochner_rfa}
    G(q,k)&
    =(2\pi \tau)^{-\frac{D}{2}}\int_{\R^D} \cos(\omega^\top z) \e^{-\frac{\|\omega\|^2}{2\tau}} \d \omega
    =\mathop{\mathbb{E}}\limits_{\omega \sim \mathcal{N}(0,\tau I_D)} \bigl[\cos(\omega^\top z)\bigr]
     \approx \frac{1}{L} \sum_{l=1}^{L} \cos( \omega_l^\top z).
\end{align}
This idea is due to \cite{RR2007RandomFeatures}, who also establish error estimates.
The kernel induced by this feature map is not nonnegative anymore, which can lead to numerical instability.

In \cite{PPYSSK2021RFA}, they resample $\omega_l$ during training, but keep them fixed at test time, as they did not notice any change of behavior when resampling.
Moreover, they use exactly $L=D$ directions.
Attention generates the queries $q$, keys $k$ and values $v$ by multiplying a token $u\in \R^D$ with three separate matrices $W_Q, W_K, W_V\in \R^{D\times D}$.
The linear projections $\omega_1,\ldots, \omega_D$ can be fused with the matrices $W_Q$ and $W_K$ to $\omega W_K$ and $\omega W_Q$, and we can assume instead the form \smash{$\nicefrac{1}{L}\sum_{l=1}^{L} \cos(q_l-k_l)$ with $L=D$}.
Therefore, we interpret the resampling during training as a training mechanism rather than a property of the kernel or attention.

\citet{performer2021} use a similar argument to directly approximate the softmax kernel with the feature map $\varphi_{\exp}(x)=\exp(-\nicefrac{\|x\|_2^2}{2})[\exp(x_1),\ldots, \exp(x_D)]$. With the same argument for $\varphi_{\cos}$ from the previous paragraph we omit the random projections.
In contrast to $\varphi_{\cos}$, their FFM $\varphi_{\exp}$ is nonnegative, but not stationary, see Proposition~\ref{prop:stationary_FFM}.
Among the FFMs introduced by \cite{performer2021}, they found  $\varphi_\mathrm{ReLU}(x)=[\mathrm{ReLU}(x_1),\ldots, \mathrm{ReLU}(x_D)]$ to generally perform best, which is why we use $\varphi_\mathrm{ReLU}$ during the numerical comparison.

Table~\ref{tab:feature_maps} summarizes the previous FFMs $\varphi\colon \R^D\to \R^{J D}$ of \cite{KVPF_Trans_are_RNNs, performer2021, PPYSSK2021RFA}, which all are of the form
\begin{equation}\label{eq:feature_map_form}
    \varphi(x)=[f_1(x_1), \ldots, f_1(x_D), \ldots, f_J(x_1), \ldots, f_J(x_D)]^\top\in \R^{J D}.
\end{equation}
\begin{table}[H]
    \centering
    \caption{Examples of FFMs of the form \eqref{eq:feature_map_form}.}
    \begin{tabular}{llll}
\toprule
         Paper & $\varphi$ & $J$ & $f_j$  \\
\midrule
         \cite{KVPF_Trans_are_RNNs}& $\varphi_\mathrm{elu}$  & $1$ & $f_1(x)= \mathrm{elu}(x)+1$\\
         \cite{PPYSSK2021RFA}& $\varphi_\mathrm{tri}$ &  $2$ & $f_1(x)= \sin(x), f_2(x)=\cos(x)$\\
         \cite{performer2021}&$\varphi_\mathrm{ReLU}$ & $1$ & $f_1(x)= \mathrm{ReLU}(x)$\\
\bottomrule
    \end{tabular}
    \label{tab:feature_maps}
\end{table}
Note that all FFM of the form \eqref{eq:feature_map_form} are additive kernels over $F(s,t)=f(s)^\top f(t)$, because
\begin{equation}
\Phi(q,k)= \varphi(q)^\top \varphi(k)
= \sum_{d=1}^D F(q_d, k_d),\quad\text{ with } \quad  F(s,t)=f(s)^\top f(t).
\end{equation}
However, their univariate kernel $F$ is a FFM with $\dim \mathcal H_F=J$ of dimension  $1$ or $2$, while the Laplace, Riesz and bump kernel have infinite dimension.

\citet{sis2021FastWeight} replaces the random features proposed by \cite{performer2021,PPYSSK2021RFA} with \emph{deterministic parameter-free projection} (DPFP): Let $r\colon \R^D\to \R^{2D}$ with $r(x)=[\mathrm{ReLU}(x),\mathrm{ReLU}(-x)]\in \R^{2D}$ and $\nu\in \{1,\ldots, D{-}1\}$.
The DPFP feature map is $\varphi\colon \R^D\to \R^{2D\nu}$ with $\varphi_{2Dj+k}(x) = r(x)_k\, r(x)_{k+j}$ for $j=0,\ldots, \nu-1$ and $k=1,\ldots, 2D$, where the indices are taken modulo $2D$.
This differs from the general form \eqref{eq:feature_map_form}.
Here, we require $\nu<D$, so that the components of $\varphi$ are linearly independent.
In fact, $\nu\geq D$ is wasteful, as it does not increase the dimension of the RKHS anymore.
In practice, we follow \citet{sis2021FastWeight} and use $\nu= 3$ with $D=64$, so that $\dim \mathcal H = 2\nu D=384$.

\section{Connection to slicing}
\label{app:sec:slicing}
Additive kernels can be seen as a deterministic slicing of radial kernels.
Given a univariate kernel $f(|s-t|)$, the sliced kernel identity by \cite{Johannes2024KernelSum} expresses a radial kernel on $\R^D$ as an expectation over uniformly random directions,
\begin{equation}\label{eq:slicing}
     F(\|q-k\|_2)=\mathbb E_{\xi\sim\mathcal U(\mathbb S^{D-1})}\bigl[f(|\langle q-k,\xi\rangle|)\bigr]\approx \frac{1}{L}\sum_{l=1}^L f(|\langle q-k,\xi_l\rangle|),
\end{equation}
where $F$ is obtained from $f$ by the Riemann Liouville transform $\mathcal S_d$, see also \citet[Eq.~(10)]{rhs2025slicing}.
The fast summation of \cite{Johannes2024KernelSum} estimates \eqref{eq:slicing} by drawing $L$ directions $\xi_1,\ldots,\xi_L$ and applying a one dimensional fast summation along each of them, which costs $\mathcal O(L(N{+}M)\log N)$ and converges at the Monte Carlo rate \smash{$\mathcal O(L^{-\nicefrac{1}{2}})$}.
Replacing the random directions by the fixed coordinate axes $e_1,\ldots,e_D$ yields, up to the factor $\nicefrac{1}{D}$, exactly the additive kernel,
\begin{equation}
  \frac{1}{L}\sum_{l=1}^L f(|\langle q-k,\xi_l\rangle|)= \frac{1}{D}\sum_{l=1}^D f(|\langle q-k,e_l\rangle|) = \frac{1}{D}\sum_{d=1}^D f(|q_d-k_d|)
\end{equation}
The two constructions therefore share the same one dimensional primitive.
Choosing for $L=D$ the directions $\xi_l=e_l$ can further be interpreted as quasi-Monte Carlo design for the integration of the sphere, which sometimes achieves better error rates than the previously mentioned standard estimate for Monte Carlo methods. We refer to \cite{QMC_slicing} for details.

In the special case of Riesz, the sliced multivariate Riesz kernel yields the univariate Riesz kernel.
Therefore, additive Riesz $\|q\|_1+\|k\|_1-\|q-k\|_1$ can be seen as a Monte Carlo approximation of the multivariate Riesz kernel  $\|q\|_2+\|k\|_2-\|q-k\|_2$.
However, the additive Laplace kernel does not appear as Monte Carlo approximation neither of $\ell_1$ Laplace nor of $\ell_2$ Laplace.
Instead additive Laplace is the Monte Carlo approximation of a power series given in \citet[Table.~1]{Johannes2024KernelSum}.
For more background on slicing see \cite{Johannes2024KernelSum, rhs2025slicing}. Applications of the slicing algorithm were also presented in \cite{generative_sliced,boufadene2025fast}.

\section{Fast Computation of Spline-Kernel Attention}
\label{app:sec:implementation}
In most transformer architectures, the head dimension $D$ is either $32,64$ or $128$, and the number of channels $C$ usually equals $D$, or $C=D{+}1$ if the values are extended by a row of constant ones for normalization.
We are interested in the case where the sequence lengths satisfy $N\approx M$ and $N,M\gg D,C$.
Let $\bm q\in \R^{M\times D}$, $\bm k\in \R^{N\times D}$ and $\bm v\in \R^{N\times C}$.  We focus on the efficient computation of \emph{absolute value sums}
\begin{align}\label{eq:sliced_riesz_sum}
z_m =\sum_{n=1}^{N}\|q_m-k_n\|_1v_n, \qquad m=1,\ldots,M.
\end{align}
The naive computation of the absolute value sum requires $\mathcal O((D{+}C)NM)$ operations and would be the computational bottleneck if $N$ and $M$ are both large.
Instead, it is possible to reduce the complexity to $\mathcal O(D(N{+}M)(\log_2 N{+}C))$.
Simply, rewrite
\begin{equation}
    \sum_{n=1}^{N}\|q_m-k_n\|_1v_n=\sum_{d=1}^{D}\sum_{n=1}^{N} |q_{m,d}-k_{n,d}|v_{n}.
\end{equation}
Each inner sum is of the type $\sum_{n=1}^{N} |s_m-t_n|v_n$, where $s_m=q_{m,d}\in \R$ and $t_n=k_{n,d}\in \R$.
The absolute value sum in $\R$ can be computed in $\mathcal O((N{+}M)(\log_2 N{+}C))$ by Algorithm~\ref{alg:weighted_abs_sum_forward}.
By Proposition~\ref{prop:spline_is_quasi} all spline kernels with finitely many nodes, can be computed in quasi linear time, by calling Algorithm~\ref{alg:weighted_abs_sum_forward} on each node. More details are given inside the proof of Proposition~\ref{prop:spline_is_quasi}

\subsection{Parallelization and Memory overhead}
\label{app:subsec:parallelization}
In practice, an efficient scalable implementation is often worth more than theoretical runtime asymptotics.
Although flash attention \citep{dao2022flash} did not change the asymptotic work, it had significant impact.
While \eqref{eq:sliced_riesz_sum} can be computed asymptotically fast, two questions must be answered simultaneously for both the forward and the backward pass: (i) Can the implementation be \emph{parallelized}? and (ii) Can we handle \emph{memory efficiently}?
Without a parallel implementation, this method cannot compete against massively parallel algorithms such as flash attention.
Even a parallel implementation will not convince in practice if it is not memory efficient.

In practice, each tensor has an additional batch dimension $P$, namely $\bm q\in \R^{P\times D\times M}$, $\bm k\in \R^{P\times D\times N}$ and $\bm v\in \R^{P\times N\times C}$.
If we implement Algorithm~\ref{alg:weighted_abs_sum_forward} via PyTorch primitives, then the prefix sums are materialized for each $D$ and each batch $P$, resulting in $\mathcal O((N{+}M)DCP)$ memory costs.
This is a known issue that has been attacked already by \cite{dao2022flash, CFGCD2021KeOps} for certain expand-reduce computations.
Of course, we could instead serialize over $D$ and compute each part independently or in a smaller batch, but then again we pay with parallelism for memory.
Yet, there is a way to parallelize completely over $P$, $D$ and $C$, staying serial only in $N/M$, without materializing any huge intermediates.
The idea is to sort $\bm q$ as well and never materialize the entire prefix sums. Instead, for each $d$, we scan through the sorted keys $k_{\sigma(n),d}$ and values $v_{\sigma(n)}$ for $n=1,\ldots,N$, compute the prefix sums on the fly and write the result for query $m$ as soon as the scan reaches position $p_m$.
Since we also sort $\bm q$, the positions $p_m$ are monotonically increasing, which ensures that no previous values are needed again.
In the end, we correct the result with the total sum.
In practice, we use a grid of size $(D, P)$ and block size $C$.
Since a warp always works in $32$ threads, our implementation favors $C$ being a multiple of $32$ and $C\leq 1024$, which is usually satisfied.
The fused scan kernel has no thread divergence and data independent runtime.
One disadvantage is that the final reduction over $D$ cannot be performed within a block, so atomic stores are needed.
This is the main bottleneck of our implementation.

\subsection{Gradients and Autodiff}
\label{app:subsec:gradients}
Since our implementation is in CUDA, we need to implement the gradient for autograd ourselves. All gradients of our method can be computed in $\mathcal O((N{+}M)\log(N{+}M))$, such that both forward and backward run subquadratic in $N{+}M$.
The subgradient of the absolute value is $\sign$, and within the gradients it is necessary to compute sums of the form
\begin{align}
\sum_{n=1}^{N} \sign(s_m{-}t_n)v_n
=& \sum_{n=1}^{N} \sign(s_m{-}t_{\sigma(n)})v_{\sigma(n)}
=  \sum_{n=1}^{p_m} v_{\sigma(n)} - \sum_{n=p_m+1}^{N}  v_{\sigma(n)} \notag \\
\label{eq:weighted_sign_sum}
=& 2 \text{pref\_v}_{p_m}-\text{pref\_v}_{N} .
\end{align}
Again, these sums can be computed with $\mathcal O((N{+}M)(\log_2 N{+}C))$ work.

Let $\bm z\in \R^{M\times C}$ be the output of \eqref{eq:sliced_riesz_sum} and $\bm g\in \R^{M\times C}$ the upstream gradient.
In the following, we compute the vector Jacobian products (VJP) of $\mathcal{J} = \langle \bm g,\bm  z \rangle$. 
The VJP w.r.t. $\bm v\in \R^{N\times C}$ is
\begin{equation}
\frac{\partial \mathcal{J}}{\partial v_{n,c}} = \sum_{d=1}^{D}\sum_{m=1}^{M} |q_{m,d} - k_{n,d}|\, g_{m,c}.
\end{equation}
This gradient can be computed just like the forward pass by swapping $\bm q \leftrightarrow \bm k$ and replacing $\bm v$ by $\bm g$.
The VJP w.r.t. $\bm q\in \R^{M\times D}$ reads
\begin{equation}
\frac{\partial \mathcal{J}}{\partial q_{m,d}} = \sum_{c=1}^{C} g_{m,c} \sum_{n=1}^{N} \sign(q_{m,d} - k_{n,d})v_{n,c}.
\end{equation}
In theory, the tensor $\textrm{pref\_v}$ could be passed from the forward to the backward pass.
However, as we compute it on the fly, we need to recompute it during the backward pass.
Lastly, the VJP w.r.t. $\bm k\in \R^{N\times D}$ is given via
\begin{equation}
\frac{\partial \mathcal{J}}{\partial k_{n,d}} = \sum_{c=1}^{C} v_{n,c}\sum_{m=1}^{M} \sign(k_{n,d} - q_{m,d})\, g_{m,c}.
\end{equation}
Both sign sums have a reduction over $C$, which is our block dimension.
Therefore, this reduction can be performed within a block and no atomics are needed anymore.

\begin{table}[t]
    \centering
    \caption{Forward+backward  of kernel attention \eqref{eq:kernel_attention} with runtime in milliseconds, mean over 10 runs after 5 warm-up iterations, with forward and backward timed together in a single region. Relative standard deviations are below $10\%$ for $N\leq512$, below $1.2\%$ at $N=1024$ and below $0.7\%$ for $N\geq2048$. Bold marks the fastest method overall; underline marks the fastest among the four \texttt{fp32} methods. Shape: $B=4$, $H=12$, $D=64$, $C=64$, $M=N$.}
    \scriptsize
    \begin{tabular}{rccccccc}
    \toprule
    & \multicolumn{3}{c}{\texttt{add\_riesz} \texttt{fp32}} & \multicolumn{4}{c}{\texttt{softmax}} \\
    \cmidrule(lr){2-4}\cmidrule(lr){5-8}
    $N$ & \texttt{GLOBAL} & \texttt{FUSED} & \texttt{KEOPS} & \texttt{MEM\_EFF32} & \texttt{MEM\_EFF16} & \texttt{FLASH16} & \texttt{CUDNN16} \\
    \midrule
    $128$    & $1.066$ & $0.913$ & $1.598$ & $\underline{0.098}$ & $0.071$ & $0.059$ & $\mathbf{0.057}$ \\
    $256$    & $1.329$ & $1.147$ & $2.424$ & $\underline{0.233}$ & $0.104$ & $\mathbf{0.067}$ & $0.069$ \\
    $512$    & $1.832$ & $1.620$ & $4.087$ & $\underline{0.609}$ & $0.231$ & $\mathbf{0.132}$ & $0.143$ \\
    $1024$   & $2.950$ & $2.664$ & $11.96$ & $\underline{1.923}$ & $0.634$ & $\mathbf{0.324}$ & $0.363$ \\
    $2048$   & $5.283$ & $\underline{4.927}$ & $40.33$ & $7.360$ & $2.262$ & $\mathbf{0.978}$ & $1.050$ \\
    $4096$   & $11.19$ & $\underline{10.66}$ & $140.5$ & $27.95$ & $8.363$ & $\mathbf{3.731}$ & $4.029$ \\
    $8192$   & $\underline{23.18}$ & -- & $520.9$ & $109.0$ & $31.72$ & $\mathbf{13.90}$ & $14.99$ \\
    $16384$  & $\mathbf{\underline{50.03}}$ & -- & $2007$ & $433.4$ & $129.8$ & $54.35$ & $57.88$ \\
    $32768$  & $\mathbf{\underline{103.8}}$ & -- & $7903$ & $1729$ & $503.6$ & $214.9$ & $228.6$ \\
    $65536$  & $\mathbf{\underline{219.1}}$ & -- & $31550$ & $6905$ & $1990$ & $856.9$ & $913.2$ \\
    $131072$ & $\mathbf{\underline{600.6}}$ & -- & $125800$ & $27730$ & $8008$ & $3442$ & $3666$ \\
    \bottomrule
    \end{tabular}
    \label{tab:fwdbwd_runtimes}
\end{table}

Table~\ref{tab:fwdbwd_runtimes} measures the speed of our additive Riesz attention versus softmax backend.
For \texttt{FUSED}, only the forward pass is fused: the backward walk reuses the sorted tensors, which the fused forward does not materialize, so it runs on the \texttt{GLOBAL} backward.
The break even points are slightly better than the forward computation in Table~\ref{tab:fwd_runtimes}, as \texttt{GLOBAL} even beats \texttt{FLASH16} before $N=16384$. 
This has a simple reason: We measure the elapsed time for the forward and the $\bm q, \bm k$ and $\bm v$- backwards. Usually, those are $4$ equally sized computations. 
For example, \texttt{FLASH16}'s forward+backward at $N=131072$ is $3442$ and about $3.55\times$ larger than its forward $968.8$ in Table~\ref{tab:fwd_runtimes}. 
However, \texttt{GLOBAL}'s presort of $\bm q$ and $\bm k$, can be reused during the three backwards leading to a factor of $3$ from $200$ to $600.6$. 
Similarly, the relative memory overhead between \texttt{GLOBAL} and  \texttt{fp32} softmax backend is around $1.5$, while the forward was around $3$ times larger.

\subsection{Padding}
\label{app:subsec:padding}
Consider a batch of $P$ samples with sequence lengths $N_p$, $p=1,\ldots,P$, and maximal length $N=\max_p N_p$.
Padding extends the queries $\bm q\in\R^{D\times N_p}$, keys $\bm k\in\R^{D\times N_p}$ and values $\bm v\in\R^{C\times N_p}$ of sample $p$ to \smash{$\bm{\tilde q},\bm{\tilde k}\in\R^{D\times N}$} and \smash{$\bm{\tilde v}\in\R^{C\times N}$} with
\begin{equation}
    \tilde q_n=q_n,\, \tilde k_n=k_n,\, \tilde v_n =v_n\quad \text{for } n\leq N_p,
    \quad 
    \tilde q_n=0,\, \tilde k_n=0,\, \tilde v_n =0 \quad \text{for } N_p<n\leq N.
\end{equation}
Since the padded values vanish, the kernel sums \eqref{eq:kernel_sum} are unchanged,
$\tilde z_m=\sum_{n=1}^{N}\Phi(\tilde q_m,\tilde k_n)\tilde v_n=z_m$ for $m\leq N_p$,
so the weighted absolute value sums of Algorithm~\ref{alg:weighted_abs_sum_forward} are applied to the padded tensors without any modification.
Only the normalization \smash{$\sum_{n=1}^{N}\Phi(\tilde q_m,\tilde k_n)$} counts every padded key with weight one and hence contains the additional term $(N-N_p)\,\Phi(\tilde q_m,0)$.
In order to fix the normalization, we compute this correction for all queries in $\mathcal O(ND)$ operations, which is negligible compared to the kernel sums.
Then, we obtain for $m=1,\ldots,N_p$ that
\begin{align}
    \frac{\sum_{n=1}^{N_p}\Phi(q_m,k_n)v_n}{\sum_{n=1}^{N_p}\Phi(q_m,k_n)}
    =\frac{\sum_{n=1}^{N}\Phi(\tilde q_m,\tilde k_n)\tilde v_n}{\sum_{n=1}^{N}\Phi(\tilde q_m,\tilde k_n)-(N-N_p)\,\Phi(\tilde q_m,0)},
\end{align}
while the outputs of the padded queries $m>N_p$ are discarded.
Thus, we can pad samples of different length to a common length $N$ in order to process inputs of different sizes in one batch, which is necessary, for example, for text models.

\subsection{Causal masking}
\label{app:subsec:causal_masking}

\begin{figure}[H]
\centering
\definecolor{allowed}{RGB}{75,160,170}
\definecolor{masked}{RGB}{225,225,225}
\begin{subfigure}{.24\textwidth}
\resizebox{\textwidth}{!}{%
\begin{tikzpicture}[
    every node/.style={font=\small},
    allowedcell/.style={fill=allowed},
    maskedcell/.style={fill=masked}
]
    \fill[maskedcell] (0,0) rectangle (8,8);
    \foreach \r in {0,...,7} {
        \pgfmathtruncatemacro{\y}{7-\r}
        \foreach \c in {0,...,\r} {
            \fill[allowedcell] (\c,\y) rectangle ++(1,1);
        }
    }

    \draw[thin] (0,0) grid (8,8);
    \draw[thick] (0,0) rectangle (8,8);

    \fill[allowedcell] (2,-.75) rectangle ++(.35,.35);
    \draw (2,-.75) rectangle ++(.35,.35);
    \node[anchor=west] at (2.5,-.575) {\Large visible};

    \fill[maskedcell] (4.4,-.75) rectangle ++(.35,.35);
    \draw (4.4,-.75) rectangle ++(.35,.35);
    \node[anchor=west] at (4.9,-.575) {\Large masked};

    \node[anchor=west] at (.5,-1.775) {\Large \phantom{processed in previous iteration}};
\end{tikzpicture}}
\caption*{Causal mask}
\end{subfigure}\hfill
\begin{subfigure}{.24\textwidth}
\resizebox{\textwidth}{!}{%
\begin{tikzpicture}[
    every node/.style={font=\small},
    allowedcell/.style={fill=allowed},
    maskedcell/.style={fill=masked},
    diagblock/.style={fill=blue!65, fill opacity=.30},
    upperblock/.style={fill=red!75, fill opacity=.30},
    lowerblock/.style={fill=orange!85, fill opacity=.35}
]
    \fill[maskedcell] (0,0) rectangle (8,8);
    \foreach \r in {0,...,7} {
        \pgfmathtruncatemacro{\y}{7-\r}
        \foreach \c in {0,...,\r} {
            \fill[allowedcell] (\c,\y) rectangle ++(1,1);
        }
    }

    \fill[diagblock]  (0,4) rectangle (4,8);
    \fill[diagblock]  (4,0) rectangle (8,4);
    \fill[upperblock] (4,4) rectangle (8,8);
    \fill[lowerblock] (0,0) rectangle (4,4);

    \draw[thin] (0,0) grid (8,8);
    \draw[thick] (0,0) rectangle (8,8);

    \draw[very thick] (4,0) -- (4,8);
    \draw[very thick] (0,4) -- (8,4);

    \fill[blue!65, fill opacity=.45] (0,-.75) rectangle ++(.35,.35);
    \draw (0,-.75) rectangle ++(.35,.35);
    \node[anchor=west] at (.5,-.575) {\Large causal sorting of smaller size};

    \fill[allowed] (0,-1.35) rectangle ++(.35,.35);
    \fill[orange!85, fill opacity=.45] (0,-1.35) rectangle ++(.35,.35);
    \draw (0,-1.35) rectangle ++(.35,.35);
    \node[anchor=west] at (.5,-1.175) {\Large sorting of smaller size};

    \fill[red!75, fill opacity=.45] (5.5,-1.35) rectangle ++(.35,.35);
    \draw (5.5,-1.35) rectangle ++(.35,.35);
    \node[anchor=west] at (6,-1.175) {\Large zero};

    \node[anchor=west] at (.5,-1.775) {\Large \phantom{processed in previous iteration}};
\end{tikzpicture}}
\caption*{Decomposition in Step 1}
\end{subfigure}\hfill
\begin{subfigure}{.24\textwidth}
\resizebox{\textwidth}{!}{%
\begin{tikzpicture}[
    every node/.style={font=\small},
    allowedcell/.style={fill=allowed},
    maskedcell/.style={fill=masked},
    diagblock/.style={fill=blue!65, fill opacity=.30},
    upperblock/.style={fill=red!75, fill opacity=.30},
    lowerblock/.style={fill=orange!85, fill opacity=.35},
    previous/.style={fill=gray!85, fill opacity=.}
]
    \fill[maskedcell] (0,0) rectangle (8,8);
    \foreach \r in {0,...,7} {
        \pgfmathtruncatemacro{\y}{7-\r}
        \foreach \c in {0,...,\r} {
            \fill[allowedcell] (\c,\y) rectangle ++(1,1);
        }
    }

    \fill[diagblock]  (0,6) rectangle (2,8);
    \fill[diagblock]  (2,4) rectangle (4,6);
    \fill[diagblock]  (4,2) rectangle (6,4);
    \fill[diagblock]  (6,0) rectangle (8,2);
    \fill[previous]   (4,4) rectangle (8,8);
    \fill[previous]   (0,0) rectangle (4,4);
    \fill[upperblock] (2,6) rectangle (4,8);
    \fill[upperblock] (6,2) rectangle (8,4);
    \fill[lowerblock] (0,4) rectangle (2,6);
    \fill[lowerblock] (4,0) rectangle (6,2);

    \draw[thin] (0,0) grid (8,8);
    \draw[thick] (0,0) rectangle (8,8);

    \draw[very thick] (4,0) -- (4,8);
    \draw[very thick] (0,4) -- (8,4);
    \draw[very thick] (2,4) -- (2,8);
    \draw[very thick] (6,0) -- (6,4);
    \draw[very thick] (0,6) -- (4,6);
    \draw[very thick] (4,2) -- (8,2);

    \fill[blue!65, fill opacity=.45] (0,-.75) rectangle ++(.35,.35);
    \draw (0,-.75) rectangle ++(.35,.35);
    \node[anchor=west] at (.5,-.575) {\Large causal sorting of smaller size};

    \fill[allowed] (0,-1.35) rectangle ++(.35,.35);
    \fill[orange!85, fill opacity=.45] (0,-1.35) rectangle ++(.35,.35);
    \draw (0,-1.35) rectangle ++(.35,.35);
    \node[anchor=west] at (.5,-1.175) {\Large sorting of smaller size};

    \fill[red!75, fill opacity=.45] (5.5,-1.35) rectangle ++(.35,.35);
    \draw (5.5,-1.35) rectangle ++(.35,.35);
    \node[anchor=west] at (6,-1.175) {\Large zero};

    \fill[previous] (0,-1.95) rectangle ++(.35,.35);
    \draw (0,-1.95) rectangle ++(.35,.35);
    \node[anchor=west] at (.5,-1.775) {\Large processed in previous iteration};
\end{tikzpicture}}
\caption*{Decomposition in Step 2}
\end{subfigure}\hfill
\begin{subfigure}{.24\textwidth}
\resizebox{\textwidth}{!}{%
\begin{tikzpicture}[
    every node/.style={font=\small},
    allowedcell/.style={fill=allowed},
    maskedcell/.style={fill=masked}
]
    \fill[maskedcell] (0,0) rectangle (8,8);
    \foreach \r in {0,...,7} {
        \pgfmathtruncatemacro{\y}{7-\r}
        \foreach \c in {0,...,\r} {
            \fill[allowedcell] (\c,\y) rectangle ++(1,1);
        }
    }

    \draw[line width=2pt] (0,0) rectangle (4,4);
    \draw[line width=2pt] (0,4) rectangle (2,6);
    \draw[line width=2pt] (4,0) rectangle (6,2);
    \draw[line width=2pt] (0,6) rectangle (1,7);
    \draw[line width=2pt] (2,4) rectangle (3,5);
    \draw[line width=2pt] (4,2) rectangle (5,3);
    \draw[line width=2pt] (6,0) rectangle (7,1);
    \foreach \i in {0,...,7} {
        \draw[line width=2pt] (\i,{7-\i}) rectangle ++(1,1);
    }
    \draw[line width=2pt] (0,0) rectangle (8,8);
    \path (-.15,-.15) rectangle (8.15,8.15);

    \node[anchor=west] at (.5,-1.775) {\Large \phantom{processed in previous iteration}};
\end{tikzpicture}}
\caption*{Final decomposition}
\end{subfigure}
\caption{Illustration of the causal mask and of Algorithm~\ref{alg:causal_sorting} for computing the kernel summation with causal mask in quasi linear time. From left to right: causal mask applied to the kernel matrix; block decomposition in the first step of Algorithm~\ref{alg:causal_sorting}; block decomposition in the second step; the resulting block partition after the full recursion.}
\label{fig:causal_mask}
\end{figure}

\begin{algorithm}[t]
\caption{Sorting algorithm for causal masked kernel sums}
\label{alg:causal_sorting}

\setcounter{algline}{0}
\begin{tabularx}{\linewidth}{@{}r X l l@{}}
\algrow \textbf{Input}
$\bm s\in \R^M$, $\bm t\in \R^N$ and $\bm v\in \R^{N\times C}$
\\
\algrow \textbf{Output} $(z_m)_{m=1}^M=\mathrm{CausalSorting}(\bm s, \bm t, \bm v)$ given by
$z_m \coloneqq \sum_{n=1}^{m} \phi(s_m,t_n)v_n\in \R^C$
\\
\algrow \textbf{Denote} by $\mathrm{Sorting}(\bm s,\bm t,\bm v)$ the application of Algorithm~\ref{alg:weighted_abs_sum_forward}
\\[1ex]
\algrow \textbf{If $\min(M,N)=1$:}
\\
\algrow \quad compute $(z_m)_{m=1}^M$ naively
\\
\algrow \textbf{Else:}
\\
\algrow \quad compute $(z_m)_{m=1}^{M/2}=\mathrm{CausalSorting}((s_m)_{m=1}^{M/2},(t_n)_{n=1}^{N/2},(v_n)_{n=1}^{N/2})$
\\
\algrow \quad compute $(z_{m,1})_{m=M/2+1}^{M}=\mathrm{Sorting}((s_m)_{m=M/2+1}^{M},(t_n)_{n=1}^{N/2},(v_n)_{n=1}^{N/2})$
\\
\algrow \quad compute $(z_{m,2})_{m=M/2+1}^{M}=\mathrm{CausalSorting}((s_m)_{m=M/2+1}^{M},(t_n)_{n=N/2+1}^{N},(v_n)_{n=N/2+1}^{N})$
\\
\algrow \quad set $z_m=z_{m,1}+z_{m,2}$ for $m=M/2+1,...,M$
\\[1ex]
\algrow \textbf{return}
$\bm z$
\\
\end{tabularx}
\end{algorithm}

Many current transformers use causal masking. That is, the kernel sum \eqref{eq:kernel_sum} is replaced by \smash{$z_m=\sum_{n=1}^m\Phi(q_m,k_n)v_n$} for $m=1,\ldots,N$. We refer to Figure~\ref{fig:causal_mask} for an illustration of the mask.
In this setting, Algorithm~\ref{alg:weighted_abs_sum_forward} is not directly applicable, since every query now sees a different set of keys.
Nevertheless, we can overcome this issue by a divide-and-conquer algorithm outlined in Algorithm~\ref{alg:causal_sorting}. For simplicity, we assume that the sequence lengths $M$ and $N$ are powers of two.
The main idea of the algorithm (for $M=N$) is that we split the masked kernel matrix into four blocks. Each diagonal block corresponds to a kernel sum with causal masking with sequence length $N/2$. The upper right off-diagonal block is zero (since the mask is zero). The lower left off-diagonal block is a standard kernel sum without masking and can be computed by Algorithm~\ref{alg:weighted_abs_sum_forward} (or the equivalent algorithm for the one dimensional Laplace kernel).

To verify the quasi linear complexity (simply assuming that $M=N=2^K$), we note that for each length $2^k$ for $k=1,\ldots,K$, we call $2^{K-k}$ times Algorithm \ref{alg:weighted_abs_sum_forward} with sequence length $2^k$. In particular, we have for each $k$ the cost $\mathcal O(2^{K-k} 2^k\log(2^k))=\mathcal O(2^{K}\log(2^k))\leq \mathcal O(N\log(N))$. So in total, we obtain the complexity $\mathcal O(KN\log(N))\leq \mathcal O(N\log^2(N))$.
In effect, Algorithm~\ref{alg:causal_sorting} decomposes the triangular mask into $2N-1$ blocks, see the final decomposition in Figure~\ref{fig:causal_mask}, which it traverses serially by the recursion.
This order is not required, however: all blocks can be computed independently, since synchronization is only needed in line~$10$, which can be resolved by atomic additions accumulating into the output tensor.

\subsection{Autoregressive decoding}
During autoregressive decoding, tokens are generated one at a time.
With causal attention, the keys and values of previous tokens do not change and are stored in a KV cache of size $\mathcal O(N(D{+}C))$.
For each new query, softmax attention reads the full cache at cost $\mathcal O(N(D{+}C))$, so decoding is memory-bound rather than compute-bound.
This procedure does not depend on the softmax structure and can be done for any kernel.

\section{Weighted Laplace Sums}
For the Laplace kernel the difficult part is to compute
$\sum_{n=1}^{N}\exp(-|s_m-t_n|)v_n$, where $s_m\in\R$ and $t_n\in\R$.
Splitting at $s_m$ as in \eqref{eq:prefix_sums} gives
$e^{-s_m}\sum_{t_n\le s_m}e^{t_n}v_n+e^{s_m}\sum_{t_n>s_m}e^{-t_n}v_n$,
so the same prefix-sum idea applies.
The individual factors $e^{\pm t_n}$ and $e^{\mp s_m}$, however, overflow in
floating point as soon as the inputs leave a small range, even though their
product is bounded by one.
Algorithm~\ref{alg:weighted_laplace_sum_forward} avoids this by expressing the
prefix sums relative to the current node instead of the origin, which costs
$\mathcal O((N{+}M)(\log_2 N+C))$ as well.

\begin{algorithm}[t]
\caption{Weighted Laplace sum}
\label{alg:weighted_laplace_sum_forward}

\setcounter{algline}{0}
\begin{tabularx}{\linewidth}{@{}r X l l@{}}
\textbf{\#} & \textbf{Step} & \textbf{Work} & \textbf{Memory} \\
\hline

\algrow \textbf{Input}
$\bm s\in\R^M$, $\bm t\in\R^N$ and $\bm v\in\R^{N\times C}$
& -
& $(C{+}1)N{+}M$
\\
\algrow \textbf{Output}
$z_m\coloneqq\sum_{n=1}^{N}\exp(-|s_m-t_n|)v_n\in\R^C\quad m=1,\ldots,M$
& -
& $MC$
\\
\algrow
$\sigma\coloneqq\texttt{argsort}(\bm t)$ s.t. $t_{\sigma(1)}\leq\ldots\leq t_{\sigma(N)}$
& $N\log_2 N$
& $N$
\\
\algrow
$p_m\coloneqq\max\bigl(\{n\colon t_{\sigma(n)}\leq s_m\}\cup\{0\}\bigr)$ for $m=1,\ldots,M$
& $M\log_2 N$
& $M$
\\
\algrow
$\delta_n\coloneqq t_{\sigma(n)}-t_{\sigma(n-1)}\geq 0$ for $n=2,\ldots,N$
& $N$
& $N$
\\
\algrow
$L_n\coloneqq e^{-\delta_n}L_{n-1}+v_{\sigma(n)}\in\R^C$ for $n=1,\ldots,N$, $L_0\coloneqq \bm 0$
& $2NC$
& $(N{+}1)C$
\\
\algrow
$R_n\coloneqq e^{-\delta_{n+1}}R_{n+1}+v_{\sigma(n)}\in\R^C$ for $n=N,\ldots,1$,  $R_{N+1}\coloneqq \bm 0$
& $2NC$
& $(N{+}1)C$
\\
\algrow
$z_m\coloneqq e^{-(s_m-t_{\sigma(p_m)})}L_{p_m}+e^{-(t_{\sigma(p_m+1)}-s_m)}R_{p_m+1}\in\R^C$ for $m=1,\ldots,M$
& $MC$
& $MC$
\\
\algrow \textbf{return}
$\bm z$
&
&
\\
\end{tabularx}
\end{algorithm}

By induction, $L_n=\sum_{j=1}^{n}e^{-(t_{\sigma(n)}-t_{\sigma(j)})}v_{\sigma(j)}$ and
$R_n=\sum_{j=n}^{N}e^{-(t_{\sigma(j)}-t_{\sigma(n)})}v_{\sigma(j)}$, so that with
$t_{\sigma(n)}\leq s_m$ for $n\leq p_m$ and $t_{\sigma(n)}>s_m$ for $n>p_m$ the
last line indeed equals $z_m$, where empty terms are read as zero.

Every exponent occurring in Algorithm~\ref{alg:weighted_laplace_sum_forward} is
non-positive: $\delta_n\geq 0$ by sorting, $s_m-t_{\sigma(p_m)}\geq 0$ and
$t_{\sigma(p_m+1)}-s_m>0$ by the definition of $p_m$.
Hence, all exponentials lie in $(0,1]$, and induction gives
$|L_n^{(c)}|\leq\sum_{j\leq n}|v_{\sigma(j)}^{(c)}|$ and
$|R_n^{(c)}|\leq\sum_{j\geq n}|v_{\sigma(j)}^{(c)}|$ for every channel $c$, so that
$|z_m^{(c)}|\leq\sum_{n=1}^{N}|v_n^{(c)}|$.
All intermediate quantities are thus bounded by the same quantity as the exact
sum itself, independently of the magnitude of $\bm s$ and $\bm t$.

In \texttt{weighted\_laplace\_sum} we provide a parallel implementation following the structure of the \texttt{weighted\_abs\_sum}. Gradients and masking are out of scope.

\section{Vision Transformer}
\label{app:sec:vision_hyper}
We train a small vision transformer \citep{DBKWZUDMHGUH2021ViT} for classification on CIFAR-10: $12$ layers, $3$ heads of dimension $D=64$, hidden dimension $192$, patch size $4$, hence sequence length $64$.
Since the sequence length does not exceed the head dimension, even a $64$-dimensional RKHS suffices to separate all tokens and the capacity mechanism of the previous section cannot bind.
This experiment is therefore deliberately a controlled sanity check: any remaining quality differences must come from properties other than dimension.

Each model is trained once from \emph{scratch} with random initialization, as well as being distilled from a softmax teacher model ($96.57\%$ accuracy) in three stages:
The student shares the architecture of the teacher and differs only in the kernel, so that stage \emph{copy} simply utilizes the teacher weights.
\emph{Layer-by-layer} then finetunes one student attention module at a time to minimize the MSE to the teacher's attention output at that layer, and \emph{end-to-end} finally finetunes all parameters against the teacher logits with temperature-scaled KL.
This distillation setup is identical to the text transformer with the single difference of using the KL divergence instead of the MSE loss for \emph{end-to-end}.

Training from scratch uses AdamW, lr $10^{-3}\to 10^{-5}$ cosine annealing with $100$ warmup epochs, weight decay $0.1$, $1000$ epochs, batch size $128$, RandAugment, MixUp/CutMix and label smoothing $0.1$.
In both the vision and text transformers, the layer-by-layer stage runs $10$k steps per layer. The end-to-end stage runs $50$k steps with KL temperature $T=4$ for the vision transformer, and $10$k steps with the MSE loss for the text transformer (Section~\ref{subsec:num_text}).

\begin{table}[ht]
    \centering
    \caption{CIFAR-10 test accuracy (\%). Training from scratch as well as piecewise distillation.}
    \scriptsize
    \begin{tabular}{llcccc}
    \toprule
    & & & \multicolumn{3}{c}{distillation from softmax} \\
    \cmidrule(lr){4-6}
    Kernel & scale $\tau$ & scratch $\uparrow$ & copy $\uparrow$ & +layerwise $\uparrow$ & +end-to-end $\uparrow$ \\
    \midrule
    \texttt{softmax} (teacher) & $2.828$ & $96.57$ & --- & --- & --- \\
    \texttt{gauss}       & $2.828$ & $96.70$ & $96.10$ & $96.56$ & $96.55$ \\
    \texttt{laplace}     & $6.0$     & $96.71$ & $94.92$ & $96.57$ & $96.64$ \\
    \texttt{riesz}       & $1.0$   & $94.14$ & $58.12$ & $95.61$ & $95.83$ \\ \midrule
    \texttt{add\_riesz}  & $1.0$   & $93.68$ & $55.93$ & $95.84$ & $96.21$ \\
    \texttt{add\_laplace}& $0.5$   & $93.94$ & $43.67$ & $95.96$ & $96.40$ \\
    \texttt{add\_bump}   & $1.5$   & $94.56$ & $43.33$ & $96.13$ & $96.26$ \\
    \texttt{tri}         & $2.828$ & $94.63$ & $39.47$ & $96.01$ & $96.13$ \\
    \texttt{relu}        & $1.0$   & $95.08$ & $62.40$ & $96.12$ & $95.91$ \\
    \texttt{elu}         & $1.0$   & $95.17$ & $43.17$ & $96.17$ & $96.35$ \\
    \texttt{dpfp}        & $1.0$   & $95.15$ & $75.84$ & $96.39$ & $96.41$ \\
    \bottomrule
    \end{tabular}
    \label{tab:vision}
\end{table}
Table~\ref{tab:vision} supports three conclusions.
First, as predicted, no capacity gap appears: all kernels train to within $1.0 \%$ accuracy of softmax after full distillation.
Second, the softmax--Gauss correspondence \eqref{eq:reweighted_gauss} is not merely formal, as the \emph{copy} stage Gauss student already reaches $96.10\%$, so the reweighting argument holds numerically, not just algebraically. Interestingly, \texttt{laplace} with $\tau=6$ also reaches $94.92\%$ without any finetuning, while all other kernels lie at least $19\%$ below.
Third, distillation matches or improves training from scratch for all kernels, in both accuracy and training time; the layer-by-layer stage does most of the work and end-to-end adds less than $1\%$.

\section{Associative recall}
\label{app:sec:associative_retrival}

\citet[Sec.~6.1 \& 6.2]{sis2021FastWeight} introduce associative recall in order to verify the capacity limits.
They start with a set of tokens $\mathcal N\coloneqq\{1,\dots,N\}$ and a learnable embedding $E\colon\mathcal N\to\R^D$. 
Two random permutations $\sigma,\xi$ of $\mathcal N$ are sampled and the sequence of embedded tokens $(u_1,\dots,u_N)$ with $u_n=E(\sigma(n))$ is built. 
To each token $u_n$ a one-hot value \smash{$v_n=e_{\xi(n)}\in\R^N$} is assigned.
The keys are computed via $k_n=W_K[u_n;v_n]$ with a learnable $W_K\in\R^{D\times(D+N)}$.
To recall the value of token $m\in\mathcal N$, a query $q_m=W_QE(m)$ with learnable $W_Q\in\R^{D\times D}$ is built and the value is retrieved as \smash{$\hat v^{(m)}=\sum_{n=1}^N A_{m,n}(\bm q,\bm k)\,v_n\in\R^N$} with $\bm k=(k_n)_{n=1}^N$, $\bm q=(q_m)_{m=1}^N$ and $A$ from \eqref{eq:gram_attention}.
The loss is
\begin{equation}
    \ell({\hat v}^{(m)}, v^{\star(m)})=\frac12\sum_{j=1}^N({\hat v}^{(m)}_j- v^{\star(m)}_j)^2= \frac12 \| {\hat v}^{(m)}- v^{\star(m)}\|_2^2,
\end{equation}where $v^{\star(m)}$ is the value assigned to token $m$, and it is evaluated for all $m\in\mathcal N$. The loss is (up to a factor of $\nicefrac12$) exactly the inner term of \eqref{eq:loss_function}, so that Lemma~\ref{lem:recall_frobenius} also yields the form with the Frobenius norm.
We differ only in the key matrix: we use $W_K\in\R^{D\times D}$ and $k_n=W_Ku_n$, as in the standard transformer.
Our setting is contained in that of \citet{sis2021FastWeight} by extending $W_K$ with a zero matrix $0\in\R^{D\times N}$.

\section{Proofs}
\label{app:sec:proofs}

\begin{proof}[Proof of Theorem~\ref{thm:inf_cap}]
By assumption, there exists some $\mathcal C_0$ function $F$ with $\Phi(q,k)=F(q-k)$.
Now, for arbitrary $N \in \N$, let $(x_n)_{n=1}^N$ be a collection of points with $\varepsilon\coloneqq\min_{n\neq m} \|x_n-x_m\|_2^2>0$.
Then, we have by definition for all $r>0$ with $k_n=q_n=r x_n$ that the Gram matrix $G$ fulfills $G_{n,n}=F(0)$ and $G_{m,n}=F(r(x_m-x_n))\to 0$ as $r\to \infty$ since $r\|x_m-x_n\|\to\infty$.
This yields that $A_{n,n}\to 1$ and $A_{m,n}\to 0$ for all $m\neq n$ such that $\mathrm{Cap}(\Phi)\geq N$ for all $N \in \N$ and thus $\mathrm{Cap}(\Phi)=\infty$.
While many transformers normalize the tokens $x_n$, the keys and queries are obtained via $q_n = W_Q x_n$ and $k_n = W_K x_n$, where $W_Q, W_K \in \mathbb{R}^{D\times D}$ are weight matrices. Choosing $W_Q = W_K = r\,\mathrm{Id}_D$ recovers $q_n = k_n = r x_n$ from the proof.
\end{proof}

\begin{proof}[Proof of Proposition~\ref{prop:softmax_capacity}]
For the Gauss kernel $\Psi(q,k)=\exp(-\frac12\|q-k\|_2^2)$, we have that $\Phi(q,k)=\Psi(q,k)\exp(\frac12(\|q\|_2^2+\|k\|_2^2))$. In particular, if 
$\|k_1\|_2=\cdots=\|k_N\|_2$, the attention matrices $A(\bm q,\bm k)$ with respect to $\Phi$ and $\Psi$ coincide, since
\begin{align}\label{eq:reweighted_gauss}
A_\Phi(\bm q, \bm k) =\frac{\e^{ q_m^\top k_n} }{\sum_{l=1}^N \e^{q_m^\top k_l} }
= \frac{\e^{\frac{1}{2}(\|k_1\|_2^2+\|q_m\|_2^2)} \e^{-\frac{1}{2} \|q_m - k_n\|_2^2} }{\e^{\frac{1}{2}(\|k_1\|_2^2+\|q_m\|_2^2)}\sum_{l=1}^N  \e^{-\frac{1}{2} \|q_m - k_l\|_2^2}}
=A_\Psi(\bm q, \bm k).
\end{align}
Thus, we obtain by the same proof as for Theorem~\ref{thm:inf_cap} that $\mathrm{Cap}(\Phi)=\infty$, provided that we choose $\|x_m\|=1$ for all $m$ in that proof, which is possible because the sphere $\{x\in \R^D\colon \|x\|_2=1\}$ for $D\ge 2$ has infinitely many points.
\end{proof}

\begin{proof}[Proof of Proposition~\ref{prop:stationary_FFM}]
As $\Phi$ is continuous and stationary there is continuous $F\colon \R^D\to \R$ with $\Phi(q,k)=F(q-k)$. 
Since $\Phi$ has a FFM expansion $F(q-k)=\Phi(q,k)=\varphi(q)^\top\varphi(k)$ we know that $F$ is spd and has RKHS $\mathcal H_\Phi$.
By Bochner's theorem, there is a real nonnegative Borel measure $\xi\in \mathcal M$ such that  \smash{$F(x)= \mathcal F[\xi](x)=\int_{\R^D} \e^{-\mathrm{i}\omega^\top x} \d\xi(\omega)$.}
Since $F$ is real symmetric $\xi$, is symmetric too and \smash{$F(x)=\int_{\R^D} \cos(\omega^\top x)\d\xi(\omega)$}.
A feature space of $\mathcal H_\Phi$ is given by $H_0\coloneqq E\oplus O$, where $E$ and $O$ denote the even and odd function classes of $L^2(\xi)$. A feature map of $\Phi$ is $\Phi_0(x)(\omega)\coloneqq [\cos(\omega^\top x), \sin(\omega^\top x)]$, because
\begin{equation}
\langle \Phi_0(q), \Phi_0(k)\rangle _{H_0}
= \int_{\R^D} \cos(\omega^\top (q-k))\d\xi (\omega)= F(q-k)=\Phi(q,k).
\end{equation}
By \citet[Thm.~4.21]{SC2008SupportVectorMachines} the map $V\colon H_0\to \mathcal H_\Phi$ given by $V(h)\coloneqq \langle h, \Phi_0\rangle$ is surjective.
For $h=[h_E,h_O]^\top \in H_0$ with $h_E\in E$ and $h_O\in O$, this operator is given by 
\begin{align}
    V(h)(x)
    &=\langle h, \Phi_0(x)\rangle = \int_{\R^D} h_E(\omega)\cos(\omega^\top x)+h_O\sin(\omega^\top x)\d\xi(\omega)\notag\\&
    =\int_{\R^D} \e^{-\mathrm{i} \omega^\top x} (h_E+\mathrm{i} h_O)\d\xi(\omega)
    =\mathcal F[(h_E+\mathrm{i} h_O)\xi](x).
\end{align}
As the Fourier transformation is injective $V(h)=0$ implies $(h_E+\mathrm{i} h_O)\xi=0$ and thus $h_E\xi=0$ and $h_O\xi=0$. Therefore, $h_E,h_O$ vanish $\xi$-almost everywhere and $V$ is injective. 
Since $\Phi$ has FFM $\varphi\colon \R^D\to \R^{2L}$, it holds $\dim \mathcal H_\Phi\leq 2L <\infty$. Since $V$ is a linear bijection, $H_0$ is finite dimensional, which renders $L^2(\xi)\cong E\oplus O=H_0$ to be finite dimensional. This implies $\operatorname{supp} \xi$ to be a finite set and hence \smash{$\xi=\sum_{l=1}^L \nicefrac{\alpha_l}{2} (\delta_{\omega_l}+\delta_{-\omega_l})$} for $\alpha_l\geq0$ and $\omega_l\in \R^D$.
By Bochner's theorem, we thus obtain
\begin{equation}
\Phi(q,k)=F(q-k)=\int_{\R^D} \cos(\omega^\top (q-k))\d\xi(\omega)=\sum_{l=1}^L \alpha_l \cos(\omega_l^\top (q-k)).
\end{equation}
The function $F$ is a trigonometric polynomial, hence almost periodic.
Its mean value satisfies $M\{F^2\}\ge \frac{1}{2}\sum_{l=1}^L\alpha_l^2>0$ unless $F=0$, by the orthogonality of characters \citep[Prop.~1.5]{Shubin_1978}.
Since $F\in \mathcal C_0$ implies $M\{F^2\}=0$ \citep[Prop.~1.8]{Shubin_1978} we conclude $F=0$.
\end{proof}

\begin{proof}[Proof of Proposition~\ref{prop:spline_is_quasi}]
Any continuous piecewise linear function $f\colon \R \to \R$ with finitely many nodes can be expressed in terms of absolute values \citep[p.~328]{SGRB2014} as
\begin{equation}\label{eq:f_spline_form}
   f(x) = a + bx + \sum_{j=1}^{J} c_j|x - x_j|, \qquad x\in \R.
\end{equation}
Let $s_1,\ldots,s_M\in\R$, $t_1,\ldots,t_N\in\R$ and $v_1,\ldots,v_N\in \R^C$.
Inserting \eqref{eq:f_spline_form} into the kernel sum yields
\begin{equation}
    \sum_{n=1}^N f(s_m-t_n)v_n
    =\underbrace{(a+bs_m)\sum_{n=1}^N v_n - b\sum_{n=1}^N t_nv_n}_{\alpha^{(0)}_m}
    + \sum_{j=1}^J c_j\underbrace{\sum_{n=1}^N |s_m -(x_j+t_n)|v_n}_{\alpha^{(j)}_m}.
\end{equation}
The constant and linear part $(\alpha^{(0)}_m)_{m=1}^M$ can be computed in $\mathcal O(C(N{+}M))$ operations.
For each $j$, we shift the keys to \smash{$t_n^{(j)}\coloneqq t_n+x_j$} and compute \smash{$(\alpha_m^{(j)})_{m=1}^M$} by the fast summation of the distance kernel applied to \smash{$(s_m)_{m=1}^M$}, \smash{$(t_n^{(j)})_{n=1}^N$} and $(v_n)_{n=1}^N$.
Each call costs $\mathcal O(C(N{+}M)\log N)$ operations, which results in a total of $\mathcal O(CJ(N{+}M)\log N)$.
Note that the sorting of $\bm t^{(j)}$ is the same as that of $\bm t$ and does not need to be repeated for each node.
Further, $\bm s$ can be sorted once to avoid the repeated bisection. Therefore, the complexity can be improved to $\mathcal O((N{+}M)(\log(N{+}M)+CJ))$.

Note that the statement extends to the noncontinuous case: every piecewise linear
function with finitely many nodes decomposes into a continuous piecewise linear
function and a piecewise constant function. The latter only requires the evaluation of \smash{$\sum_{n=1}^N \operatorname{sign}(s_m-t_n)v_n$} which can be solved in quasi linear time as explained in \eqref{eq:weighted_sign_sum}.
\end{proof}

\begin{proof}[Proof of Proposition~\ref{prop:riesz_capacity}]
For the Riesz kernel let $N\geq3$ and $s_n,t_n\in \R$ be arbitrary points with $n\in \mathcal N=\{1,\ldots, N\}$.
Further, let $\varepsilon>0$ and $A\coloneqq A_{r_\varepsilon}(\bm s, \bm t)$ denote the attention matrix of $r_\varepsilon$ given $\bm s$ and $\bm t$.
Fix $n_+\in\arg\max_{n\in \mathcal N} t_n$ and
$n_-\in\arg\min_{n\in \mathcal N} t_n$. The map $t\mapsto r_\varepsilon(s_n,t)=|s_n|+|t|-|s_n-t|+\varepsilon$ is monotone for all $n\in \mathcal N$, hence the maximum is attained at $t_{n^\star}$ where $n^\star \in \{n_+,n_-\}$.
For all $n\in \mathcal N\setminus \{n_+,n_-\}$ it holds $A_{n,n^\star}\geq A_{n,n}\geq0$ and thus
\begin{equation}
    \|A_{n,\colon}-e_n\|_2^2 \geq(A_{n,n}-1)^2+A_{n,n^\star}^2 \geq (1-A_{n,n})^2+A_{n,n}^2 \geq \tfrac12 .
\end{equation}
Since $\#(\mathcal N\setminus \{n_+,n_-\})\geq N-2$ we have
$\sum_{n\in \mathcal N}\|A_{n,\colon}-e_n\|_2^2\ge\frac{N-2}{2}$.
For $N=2$ we can choose $s_1=t_1=\tau$ and $s_2=t_2=-\tau$. Then it holds $A\to \mathrm{Id}_2$ as $\tau\to \infty$, so the capacity is exactly $2$.
\end{proof}

\begin{proof}[Proof of Theorem~\ref{thm:additive_capacity}]
To distinguish the Gram and attention matrices with respect to $\Phi$ and $\phi$, we denote them by $G_\Phi$ and $G_\phi$ or $A_\Phi$ and $A_\phi$, respectively. Now let $N\leq\mathrm{Cap}(\phi)$ and $\varepsilon>0$. Then, we know that there exist $\bm s,\bm t\in\R^N$ such that $\|A_\phi(\bm s,\bm t)-\mathrm{Id}_N\|_F^2<\varepsilon$. Define $\bm q,\bm k\in(\R^D)^N$ by $q_{m,d}=s_m$ and $k_{n,d}=t_n$. Then, we obtain by definition that $G_\Phi(\bm q,\bm k)=D G_\phi(\bm s,\bm t)$ and therefore $A_\Phi(\bm q,\bm k)=A_\phi(\bm s,\bm t)$. In particular, we have $\|A_\Phi(\bm q,\bm k)-\mathrm{Id}_N\|_F^2=\|A_\phi(\bm s,\bm t)-\mathrm{Id}_N\|_F^2<\varepsilon$. Since this works for any $\varepsilon>0$, we obtain that $\inf_{\bm q,\bm k}\|A_\Phi(\bm q,\bm k)-\mathrm{Id}_N\|_F^2=0$ and $\mathrm{Cap}(\Phi)\geq N$.

Let $\phi=r_\varepsilon$ be the Riesz kernel.
For arbitrary $D$ and $N=2D$, we can choose $q_n=k_n=\tau e_n$ for $n=1,\ldots, D$ and $q_n=k_n=-\tau e_{n-D}$ for $n=D+1,\ldots, 2D$. Then it holds for $n\neq m$, that $G_{m,n}(\bm q, \bm k)=D\varepsilon$ and for $n=m$, $G_{n,n}(\bm q, \bm k)=D\varepsilon+2\tau$. Hence, we get $A(\bm q, \bm k)\to \mathrm{Id}_N$ as $\tau\to \infty$, so the capacity is at least $2D$.

Symmetry and translation invariance clearly carry over from $\phi$ to $\Phi$.
If $\phi$ is positive definite, then for all $\alpha_1,\ldots,\alpha_N\in \R$ and $x_1,\ldots,x_N\in \R^D$ it holds
\begin{equation}
    \sum_{n,m=1}^N \alpha_n\alpha_m \Phi(x_n,x_m)
    = \sum_{d=1}^D \sum_{n,m=1}^N \alpha_n\alpha_m \phi(x_{n,d},x_{m,d})
    \geq0,
\end{equation}
so $\Phi$ is positive definite.
Now, let $\phi$ be quasi linear and let $q_1,\ldots,q_M\in \R^D$, $k_1,\ldots,k_N\in \R^D$ and $v_1,\ldots,v_N\in \R^C$.
Rearranging the kernel sum yields
\begin{equation}
    z_m
    =\sum_{n=1}^N \Phi(q_m,k_n)v_n
    = \sum_{d=1}^D \sum_{n=1}^N \phi(q_{m,d},k_{n,d})v_n.
\end{equation}
Since $\phi$ is quasi linear, each of the $D$ inner sums can be computed in quasi linear time for all $m=1,\ldots,M$ simultaneously. Summing the results over $d$ costs another $\mathcal O(MDC)$ operations, hence $(z_m)_{m=1}^M$ is computed in quasi linear time for every fixed $D$.
\end{proof}

\begin{proof}[Proof of Lemma~\ref{lem:recall_frobenius}]
Since the normalization in \eqref{eq:gram_attention} sums over all keys, we have
$A_{m, l}(\bm q, \bm k_\sigma)=A_{m,\sigma(l)}(\bm q,\bm k)$. With $A\coloneqq A(\bm q,\bm k)$
this gives $y_{\sigma(j)}=\sum_{l} A_{\sigma(j),\sigma(l)}\,e_{\xi(l)}$, and thus
\begin{align}
\sum_{j=1}^N\|e_{\xi(j)}-y_{\sigma(j)}\|_2^2
&=\sum_{j=1}^N\Big\|\sum_{l=1}^N\big(\delta_{j,l}-A_{\sigma(j),\sigma(l)}\big)e_{\xi(l)}\Big\|_2^2 \notag\\
&=\sum_{j=1}^N\sum_{l=1}^N\big(\delta_{\sigma(j),\sigma(l)}-A_{\sigma(j),\sigma(l)}\big)^2\notag\\
&=\sum_{m=1}^N\sum_{n=1}^N\big(\delta_{m,n}-A_{m,n}\big)^2
=\|A-\mathrm{Id}_N\|_F^2.
\end{align}
The second equality uses that $(e_{\xi(l)})_{l=1}^N$ is an orthonormal basis, and the third
substitutes $n=\sigma(l)$ and $m=\sigma(j)$.
\end{proof}

\end{document}